\documentclass{article}


\usepackage[final]{neurips_2024}
\usepackage{algorithmic,algorithm}




\usepackage[utf8]{inputenc} 
\usepackage[T1]{fontenc}    
\usepackage{url}            
\usepackage{booktabs}       
\usepackage{amsfonts}       
\usepackage{nicefrac}       
\usepackage{microtype}      
\usepackage{xcolor}         
\usepackage{stmaryrd}
\usepackage{microtype}
\usepackage{graphicx}
\usepackage{subfigure}
\usepackage{booktabs} 

\usepackage{hyperref}



\usepackage{amsmath}
\usepackage{amssymb}
\usepackage{mathtools}
\usepackage{amsthm}

\usepackage[capitalize,noabbrev]{cleveref}


\usepackage[textsize=tiny]{todonotes}

\usepackage{mathtools}
\usepackage{tikz}
\usetikzlibrary{shapes,arrows,positioning}
\usepackage{standalone}
\usepackage{amsmath, amssymb, natbib, graphicx, url, amsthm}
\bibliographystyle{plainnat}
\usepackage[mathcal]{eucal}
\usepackage{bm}
\usepackage{verbatim}
\usepackage{array}
\usepackage{multirow}
\usepackage{graphicx}
 \usepackage{tcolorbox}
 \newtcolorbox{assbox}{colback=black!5!white,colframe=black!75!black}
  \newtcolorbox{thmbox}{colback=red!5!white,colframe=red!75!black}
  \usepackage{amsfonts}       
\usepackage{enumitem}
\usepackage{xcolor}
\usepackage{diagbox} 
\usepackage{nicefrac}

\newcommand{\RMS}{\mathrm{rms}}

\newcommand{\loss}{\mathrm{loss}}
\newcommand{\hid}{\mathrm{hid}}
\renewcommand{\:}{{\,:\,}}

\newcommand{\vertiii}[1]{{\left\vert\kern-0.25ex\left\vert\kern-0.25ex\left\vert #1 
    \right\vert\kern-0.25ex\right\vert\kern-0.25ex\right\vert}}


\newcommand{\RR}{\mathbb{R}}

\newcommand{\ZZ}{\mathbb{Z}}

\newcommand{\Ll}{\mathcal{L}}

\newcommand{\Nn}{\mathcal{N}}

\newcommand{\E}{\mathbf{E}}

\renewcommand{\d}{\mathrm{d}}




\DeclareMathOperator{\Var}{Var}


\newtheorem{theorem}{Theorem}[section]
\newtheorem{proposition}[theorem]{Proposition}
\newtheorem{lemma}[theorem]{Lemma}

\newtheorem{definition}[theorem]{Definition}

\graphicspath{ {./images/} }

\title{The Feature Speed Formula: a flexible approach to scale hyper-parameters of deep neural networks}

%

\author{%
Lénaïc Chizat\\
Institute of Mathematics, EPFL\\
Lausanne, Switzerland\\
\texttt{lenaic.chizat@epfl.ch}
   \And
  Praneeth Netrapalli\\
  Google DeepMind\\
  \texttt{pnetrapalli@google.com}
}

\begin{document}

\maketitle

\begin{abstract}
Deep learning succeeds by doing hierarchical feature learning, yet tuning hyper-parameters (HP) such as initialization scales, learning rates etc., only give indirect control over this behavior.
In this paper, we introduce a key notion to predict and control feature learning: the angle $\theta_\ell$ between the feature updates and the backward pass (at layer index $\ell$).
We show that the magnitude of feature updates after one GD step, at any training time, can be expressed via a simple and general \emph{feature speed formula} in terms of this angle $\theta_\ell$, the loss decay, and the magnitude of the backward pass.
This angle $\theta_\ell$ is controlled by the conditioning of the layer-to-layer Jacobians and at random initialization, it is determined by the spectrum of a certain kernel, which coincides with the Neural Tangent Kernel when $\ell=\text{depth}$. 
Given $\theta_\ell$, the feature speed formula provides us with rules to adjust HPs (scales and learning rates) so as to satisfy certain dynamical properties, such as feature learning and loss decay.
We investigate the implications of our approach for ReLU MLPs and ResNets in the large width-then-depth limit. Relying on prior work, we show that in ReLU MLPs with iid initialization, the angle degenerates with depth as $\cos(\theta_\ell)=\Theta(1/\sqrt{\ell})$. In contrast, ResNets with branch scale  $O(1/\sqrt{\text{depth}})$ maintain a non-degenerate angle $\cos(\theta_\ell)=\Theta(1)$. 
We use these insights to recover key properties of known HP scalings (such as $\mu$P), and also introduce a new HP scaling for large depth ReLU MLPs with favorable theoretical properties.
\end{abstract}

\section{Introduction}\label{sec:introduction}

The ability of deep Neural Networks (NNs) to learn hierarchical representations of their inputs has been argued to be behind their strong performance in data-intensive machine learning tasks~\citep{lecun2015deep}. 
Yet, the process via which gradient-based training leads to feature learning remains mysterious and defies our intuition;  some architectures can even reach zero loss without feature learning at all~\citep{jacot2018neural}.
This limited understanding makes it difficult to design NNs architectures and hyper-parameters (HP) scalings, and begs the development of tools to quantify feature learning.

In this paper, we demonstrate that the \emph{backward-feature angle} (BFA) $\theta_\ell$ between the feature updates and the backward pass (at layer index $\ell$) is a central object in this quest.
Indeed, we show that the magnitude of feature updates after one GD step, at any training time, can be expressed via a simple \emph{feature speed formula} in terms of this angle, the loss decay and the magnitude of the backward pass.
Given the knowledge of $\theta_\ell$, this leads to a general approach to quantify key dynamical properties of the training dynamics of NNs -- such as the speed of feature learning and loss decay -- and to characterize the HP scalings satisfying these properties.

\paragraph{Contributions} Our contributions are the following:

 \begin{itemize}
 \item We prove the \emph{feature speed formula} (Thm~\ref{thm:main}) which quantifies the feature updates in terms of the BFA $\theta_\ell$, the loss decay and the magnitude of the backward pass at layer $\ell$. This formula, valid in any architecture and with an elementary proof, helps exploring the space of HP scalings, and understanding when feature learning arises.
 \item In Section~\ref{sec:aligned-vs-full-MLP}, we develop tools to quantify the BFA. In particular, we show that, in the \emph{batch-size $1$ case},  $\theta_\ell$ can be estimated in terms of the spectrum of the backward to feature kernel (BFK) $K_\ell$ and is related to the conditioning of layer-to-layer Jacobians (Thm.~\ref{thm:BFA-spectrum}). We study the case of MLPs and ResNets at random initialization, and obtain that for a depth $L$, $\cos(\theta_L)=\Theta(L^{-\nicefrac12})$ for ReLU MLPs (Prop.~\ref{prop:scales-RELU-MLP}, exploiting a result in~\cite{jelassi2023depth}) and that $\cos(\theta_L)=\Theta(1)$ for linear ResNets with branch scale $O(L^{-\nicefrac12})$ (Prop.~\ref{prop:scales-linear-resnet}).
 \item In Section~\ref{sec:criteria}, we consider several properties of NN training dynamics that can be conveniently studied with our tools, including feature learning and loss decay. Enforcing these properties leads to explicit contraints on the magnitude of the forward, backward pass and learning rates in general architectures (Prop.~\ref{prop:backward-crit}). 
 \item In Section~\ref{sec:properties-MLP-resnets}, we show how various HP scalings for large width-then-depth MLPs and ResNets can be characterized by enforcing these properties. In particular we recover depth $\mu$P~\citep{bordelon2023depthwise,yang2023feature} for ResNets (Table~\ref{tab:scalings-ResNets}) and, for ReLU MLP, we introduce a scaling with output scale $\frac{\sqrt{\mathrm{depth}}}{\mathrm{width}}$ (Table~\ref{tab:scalings-MLP}) that does not suffer from vanishing loss decay, in contrast to the one studied in~\citep{jelassi2023depth}.
 \item Finally, in Section~\ref{sec:gradient_stability} we develop a more ``axiomatic'' approach: starting from a minimal list of desiderata, which include a notion of gradient stability, we show that we recover, in a certain extent, the convenient properties considered in Section~\ref{sec:criteria}. This section focuses on homogeneous architectures for which we show, along the way, a \emph{backward speed formula} (Prop.~\ref{prop:backward-speed}) and an invariance under block-wise rescaling (Prop.~\ref{prop:scale-invariance}).
 \end{itemize} 

\paragraph{Related work}
The theory of NNs has recently benefited from important insights from asymptotic analyses in the large width and/or depth limits. Our work is in the continuity of those.

Analyses of wide and deep NNs at random initialization led to identifying critical initialization scalings that enable signal propagation~\citep{poole2016exponential, hanin2018start, hanin2018neural}. They also identified dynamical isometry~\citep{pennington2018emergence}, namely the concentration of the singular spectrum of the layer-to-layer Jacobians around $1$, as an important indicator of training performance. Our analysis gives a concrete justification of the link between dynamical isometry and successful training, as we show that it is related to the alignment between the backward  pass and feature updates. These questions have also been studied in ResNets, see e.g.~\citep{hayou2021stable, marion2022scaling, li2021future} for signal propagation and~\citep{tarnowski2019dynamical,ling2019spectrum} for dynamical isometry. 

In 2018, two viewpoints for the dynamics of wide NNs were simultaneously introduced: a feature learning limit for two-layer MLPs~\citep{mei2018mean, chizat2018global, rotskoff2018neural} and a limit without feature learning for general NNs~\citep{jacot2018neural, du2018gradient, allen2019convergence}. These works highlighted the crucial role of HP scalings -- learning rates and initialization -- in the behavior of large NNs~\citep{chizat2019lazy}. 

In order to classify HPs scalings,~\citep{yang2021tensor} formulated the \emph{maximal update} $\mu$-criterion (it is part of the properties we study in Section~\ref{sec:criteria}). This criterion led to a full classification of HP scalings in the infinite hidden width limit (at fixed depth), and singled-out the so-called $\mu$-parameterization ($\mu$P) as ideal for this criterion. We note that, provided alignment holds, our analysis allows in particular to characterize $\mu$P in an elementary way. See also ~\citep{yang2023spectral} for another simple derivation of $\mu$P using matrix spectral norms (but that does not give tight control on the magnitude of feature learning and does not a priori apply to the large depth asymptotics). Several works have since shown the practical value of these analyses in predicting the behavior of NNs~\citep{vyas2023feature} and improving HP tuning~\citep{yang2021tuning}.

When restricted to the output layer of a NN, our notion of alignment/BFA coincides with that studied in~\cite{baratin2021implicit, atanasov2021neural, lou2022feature, wang2022spectral} and the BFK we consider coincides with the NTK~\citep{jacot2018neural}. We extend these concepts to study and quantify feature learning at \emph{any} layer (not just at the output layer). Here we focus on the batch-size $1$ setting, but the large batch-size setting of these works is a natural next direction for our analysis.

Finally, several recent works have studied feature learning in infinite width and depth NNs, starting with~\citep{jelassi2023depth} for MLPs, and~\citep{bordelon2023depthwise,yang2023feature} for ResNets. The two latter identified the $1/\sqrt{\text{depth}}$ branch scaling as providing desirable properties, in particular that of \emph{HP transfer}~\citep{yang2021tuning}. These works take the infinite width limit as a first step in their analysis, before studying the resulting objects, resulting in a technical analysis. In our approach, we first take the step-size to $0$ (as in~\citep{jelassi2023depth}) and study in detail the structure of the back-propagation equations, before taking the large width-then-depth limit, as a last step. 

\paragraph{Notations}
For integers $a,b\in \ZZ$, we write $[a\:b]=\{a,\dots,b\}$. For any vector $x\in \RR^{m}$ we denote by $\Vert x\Vert_\RMS\coloneqq m^{-\nicefrac12}\Vert x\Vert_2$ its root mean-square (RMS) norm. We use this as a proxy for the typical entry size of a vector, which is justified as long as that vector is dense. 
 
 \section{The Feature Speed Formula}
 \label{sec:BAFU}
 Consider a depth-$L$ NN architecture defined by the recursion, for $\ell\in[1:L]$,
 \begin{align}\label{eq:NN-general}
f_0\in \RR^{m_0},&& f_{\ell}=T_\ell(f_{\ell-1},w_\ell)\in \RR^{m_\ell},&&\Ll=\text{loss}(f_L)\in \RR
 \end{align}
 where $w_\ell\in \RR^{p_\ell}$ are trainable parameters and we assume that the maps $T_\ell:\RR^{m_{\ell-1}}\times \RR^{p_\ell}\to \RR^{m_\ell}$ admit elementary (log-exp) selections\footnote{This is a technical assumption that covers virtually all functions of interest in deep learning. In particular, the maps $T_\ell$ admit \emph{selection derivatives} that are compatible with the chain rule~\citep[Prop.~4]{bolte2020mathematical} and that coincide almost everywhere with standard derivatives~\citep[Prop.~3]{bolte2020mathematical}. To simplify our exposition, we always implicitly assume that we are at a differentiability point of the maps $T_\ell$.}~\citep{bolte2020mathematical}. By flattening the tensors, one can encode most practical NN architectures in Eq.~\eqref{eq:NN-general}. For instance, $m_0$ is typically the product of batch-size (or context length) with input dimension. We denote by $b_\ell = \big(\frac{\partial \Ll}{\partial f_\ell}\big)^\top \in \RR^{m_{\ell}}$ the vectors of the backward pass. A gradient descent (GD) step with layerwise learning-rate (LR) $\eta_\ell \cdot \delta t>0$ for $\ell\in [1:L]$ consists in adding to each  $w_\ell$ the update 
$$
\delta w_\ell = -\eta_\ell \cdot \delta t\cdot \nabla_\ell \Ll  = -\eta_\ell \cdot \delta t \cdot \Big(\frac{\partial \Ll}{\partial w_\ell}\Big)^\top.
$$
We are interested on the evolution of the NN over a single GD step with infinitesimally small step-size $\delta t \ll 1$. For any quantity $x$ associated to the NN, we denote $\dot x$ its instantaneous velocity $\dot x \coloneqq \lim_{\delta t \downarrow 0} \frac{\delta x}{\delta t}$ when it exists. In particular, we have $\dot w_\ell = -\eta_\ell \nabla_\ell \Ll$.

The following identity is the seed of our approach. It expresses at any training time the speed of features in terms of other interpretable quantities, including the \emph{backward to feature angle} (BFA) $\theta_v$.
\begin{theorem}[Feature speed formula]\label{thm:main}
Let $v\in [1\:L]$. If $\sum_{\ell\leq v} \eta_\ell \Vert \nabla_\ell \Ll\Vert_2^2=0$ then $\dot f_v=0$. Otherwise, the (non-oriented) angle  $\theta_v$ between $\dot f_v$ and $-b_v$ is well defined in $[0,\nicefrac{\pi}{2}[$ and it holds
\begin{equation}\label{eq:feature-speed}
\Vert \dot{f_v} \Vert_2 = \frac{\sum_{\ell\leq v} \eta_\ell \Vert \nabla_\ell \Ll\Vert_2^2}{\cos(\theta_v)\cdot \Vert b_v\Vert_2}.
\end{equation}
\end{theorem}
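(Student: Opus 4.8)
The plan is to reduce the statement to elementary linear algebra by differentiating the forward recursion along the gradient flow and then invoking the back-propagation identity. First I would express the instantaneous feature velocity via the chain rule: since $f_v$ depends on the weights only through $w_1,\dots,w_v$, differentiating Eq.~\eqref{eq:NN-general} gives
\[
\dot f_v = \sum_{\ell\le v} J_\ell\, \dot w_\ell, \qquad J_\ell \coloneqq \frac{\partial f_v}{\partial w_\ell}\in\RR^{m_v\times p_\ell},
\]
with $\dot w_\ell=-\eta_\ell\nabla_\ell\Ll$. The key structural observation is that $f_v$ is a \emph{bottleneck} for the dependence of the loss on $w_\ell$ when $\ell\le v$: the later features $f_{v+1},\dots,f_L$, and hence $\Ll$, are functions of $f_v$, so the chain rule yields $\nabla_\ell\Ll = J_\ell^\top b_v$. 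Substituting $\dot w_\ell=-\eta_\ell J_\ell^\top b_v$ collapses the sum into
\[
\dot f_v = -\Big(\sum_{\ell\le v}\eta_\ell J_\ell J_\ell^\top\Big)b_v \eqqcolon -A\,b_v,
\]
where $A\succeq 0$ because each $\eta_\ell>0$.

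Next I would extract the single inner product that drives the formula. Directly,
\[
\langle \dot f_v,\,-b_v\rangle = b_v^\top A\, b_v = \sum_{\ell\le v}\eta_\ell\,\Vert J_\ell^\top b_v\Vert_2^2 = \sum_{\ell\le v}\eta_\ell\,\Vert \nabla_\ell\Ll\Vert_2^2,
\]
so the numerator appearing in the statement is exactly the nonnegative quantity $\langle\dot f_v,-b_v\rangle$. I would then split into the two cases. If this quantity vanishes, then since $\eta_\ell>0$ each term $\Vert J_\ell^\top b_v\Vert_2$ vanishes, hence $A b_v=0$ and $\dot f_v=0$, which is the first claim. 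Otherwise the inner product is strictly positive, which forces both $\dot f_v\neq 0$ and $b_v\neq 0$; the non-oriented angle $\theta_v$ between $\dot f_v$ and $-b_v$ is therefore well defined, and positivity of $\langle\dot f_v,-b_v\rangle$ places $\cos\theta_v>0$, i.e.\ $\theta_v\in[0,\nicefrac{\pi}{2})$. Rearranging the definition $\cos\theta_v = \langle\dot f_v,-b_v\rangle/(\Vert\dot f_v\Vert_2\Vert b_v\Vert_2)$ then gives Eq.~\eqref{eq:feature-speed}.

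I expect the only genuine subtlety --- the main obstacle --- to lie not in the algebra but in justifying the two ingredients feeding it: the back-propagation identity $\nabla_\ell\Ll=J_\ell^\top b_v$ and the applicability of the chain rule to the nonsmooth maps $T_\ell$. The former is a direct consequence of the layered structure of Eq.~\eqref{eq:NN-general}, in which $f_v$ screens off $w_1,\dots,w_v$ from $\Ll$; the latter is covered by the elementary-selection hypothesis and the chain-rule compatibility of selection derivatives from \citep{bolte2020mathematical}, working at a differentiability point as assumed. Once these are in place, the positive-semidefiniteness of $A$ supplies everything else, so the remaining work is purely mechanical.
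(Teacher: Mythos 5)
Your proof is correct and follows essentially the same route as the paper's: the chain rule for $\dot f_v$, the back-propagation identity $\nabla_\ell\Ll = (\partial f_v/\partial w_\ell)^\top b_v$ to collapse $-b_v^\top \dot f_v$ into $\sum_{\ell\le v}\eta_\ell\Vert\nabla_\ell\Ll\Vert_2^2$, and the definition of the angle to conclude. Your explicit packaging of the intermediate operator $A=\sum_{\ell\le v}\eta_\ell J_\ell J_\ell^\top$ is exactly the paper's backward-to-feature kernel $K_v$, which the authors also note parenthetically at the end of their proof.
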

\begin{proof}
By the chain rule, we have
$
\dot{f}_v = \sum_{\ell \leq v} \frac{\partial f_v}{\partial w_\ell}\dot w_\ell
=- \sum_{\ell \leq v} \eta_\ell \frac{\partial f_v}{\partial w_\ell}  \Big(\frac{\partial \Ll}{\partial w_\ell}\Big)^\top.
$
It follows
\begin{equation}\label{eq:proof-main}
-b_v^\top \dot{f}_v = \sum_{\ell \leq v} \eta_\ell \frac{\partial \Ll}{\partial f_v} \frac{\partial f_v}{\partial w_\ell}\Big(\frac{\partial \Ll}{\partial w_\ell}\Big)^\top =\sum_{\ell \leq v} \eta_\ell \Big(\frac{\partial \Ll}{\partial w_\ell}\Big)\Big(\frac{\partial \Ll}{\partial w_\ell}\Big)^\top= \sum_{\ell\leq v} \eta_\ell \Vert \nabla_\ell \Ll\Vert_2^2.
\end{equation}
Clearly, if $\sum_{\ell\leq v} \eta_\ell \Vert \nabla_\ell \Ll\Vert_2^2=0$ then $\dot w_\ell=0$ for $\ell\leq v$ and thus $\dot f_v=0$. Otherwise $\theta_v$ is well defined and it holds $ \Vert b_v\Vert_2 \Vert \dot f_v\Vert_2 \cos(\theta_v)= -b_v^\top \dot{f}_v = \sum_{\ell\leq v} \eta_\ell \Vert \nabla_\ell \Ll\Vert_2^2$ and the claim follows. (In terms of the BFK defined below, Eq.~\eqref{eq:proof-main} is equivalent to $b_v^\top K_v b_v = \sum_{\ell\leq v} \eta_\ell \Vert \nabla_\ell \Ll\Vert_2^2$, for $v\in [1\:L]$.)
\end{proof}

To better appreciate the content of Thm.~\ref{thm:main}, let us re-express it in terms of root mean-square (RMS) norms. Let $\dot \Ll_{\leq v}\coloneqq \sum_{\ell\leq v} \eta_\ell \Vert \nabla_\ell \Vert^2_2$ be the contribution to the loss decrease of all the parameters before $f_v$ in the forward pass, and note that $\dot \Ll_{\leq L}=\dot \Ll$. Then, the identity~\eqref{eq:feature-speed} rewrites as
\begin{equation}\label{eq:MLP-aligned-update}
\frac{\Vert \dot f_v\Vert_\RMS}{\dot \Ll_{\leq v}} = \frac{1}{\cos(\theta_v)\cdot m_v\cdot \Vert b_v\Vert_\RMS} \eqqcolon S_v.
\end{equation}
Here $S_v$ can be interpreted as the \emph{sensitivity} (and, in the terminology of~\citep{chizat2019lazy}, $1/S_v$ as the \emph{laziness}) of the feature $v$: it is the proportionality factor between loss decay and feature speed. 
This formula is valid at any training time and involves three key quantities: the scale of the backward pass $\Vert b_v\Vert_\RMS$, the size of the feature $m_v$, and the BFA $\theta_v$. Let us now build tools to quantify the BFA.

\section{Quantifying the backward-feature angles (BFA)}\label{sec:aligned-vs-full-MLP}
Information about the BFA $\theta_v$ can be gained from the Backward to Feature Kernel (BFK).
\begin{definition}[Backward to Feature Kernel] For $v\in [1\:L]$, the BFK is the psd matrix defined as
\begin{align}\label{eq:BFK}
K_{v} \coloneqq \sum_{\ell\leq v} \eta_\ell \Big(\frac{\partial f_v}{\partial w_\ell}\Big) \Big(\frac{\partial f_v}{\partial w_\ell}\Big)^\top
\in \RR^{m_v\times m_v}.
\end{align}
\end{definition}
\vspace{-0.2cm}
By construction, it holds $\dot f_v = -K_v b_v$. In other words, the BFK takes a backward pass vector as input and returns the (negative of the) feature velocity. For $v=L$, $K_v$ coincides with the Neural Tangent Kernel~\citep{jacot2018neural}. We now show how the sprectrum of $K_v$ relates to BFA. 
\begin{theorem}\label{thm:BFA-spectrum}
Let $\lambda_1\geq \dots\geq  \lambda_{m_v}\geq 0$ be the sorted eigenvalues of $K_v$ and let $M_p \coloneqq \frac{1}{m_v}\sum_{i=1}^{m_v} \lambda_i^p$ be its spectral moments. It holds $
\frac{\lambda_{m_v}}{\lambda_1} \leq \cos(\theta_v)\leq 1$. Moreover, if $b_v$ is Gaussian and independent from $K_v$, then as $m_v\to \infty$,
$$
\cos(\theta_v) \overset{pr.}{\to}\frac{M_1}{\sqrt{M_2}}.
$$
as soon as $\nicefrac{\sqrt{M_2}}{M_1}$ and $\nicefrac{\sqrt{M_4}}{M_2}$ are uniformly bounded (i.e.~are upper bounded by some $C>0$ with probability going to $1$ as $m_v\to \infty$). 
\end{theorem}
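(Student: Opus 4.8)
The plan is to base everything on the identity $\dot f_v=-K_v b_v$ noted right after the definition of the BFK, which turns the cosine into a purely spectral quantity:
\[
\cos(\theta_v)=\frac{-b_v^\top \dot f_v}{\Vert b_v\Vert_2\,\Vert \dot f_v\Vert_2}=\frac{b_v^\top K_v b_v}{\Vert b_v\Vert_2\,\Vert K_v b_v\Vert_2}.
\]
I would first dispatch the deterministic two-sided bound. The upper bound $\cos(\theta_v)\le 1$ is Cauchy--Schwarz. For the lower bound, using that $K_v$ is psd with extreme eigenvalues $\lambda_{m_v}\le\lambda_1$, the Rayleigh quotient gives $b_v^\top K_v b_v\ge \lambda_{m_v}\Vert b_v\Vert_2^2$ while the operator-norm bound gives $\Vert K_v b_v\Vert_2\le \lambda_1\Vert b_v\Vert_2$; dividing yields $\cos(\theta_v)\ge \lambda_{m_v}/\lambda_1$. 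This already records the intuition that a well-conditioned BFK forces alignment.

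For the asymptotic identity, the key step is to condition on $K_v$ and exploit the rotational invariance of the isotropic Gaussian $b_v$. Diagonalizing $K_v=U\diag(\lambda_1,\dots,\lambda_{m_v})U^\top$ and setting $c\coloneqq U^\top b_v$, conditionally on $K_v$ the vector $c$ is again isotropic Gaussian and independent of the now-frozen eigenvalues, so the $c_i$ are i.i.d.\ with $\EE[c_i^2]=1$ and $\Var(c_i^2)=2$. In this basis the scale of $b_v$ cancels and
\[
\cos(\theta_v)=\frac{A}{\sqrt{B\,C}},\qquad A=\tfrac1{m_v}\!\sum_i\lambda_i c_i^2,\ \ B=\tfrac1{m_v}\!\sum_i c_i^2,\ \ C=\tfrac1{m_v}\!\sum_i\lambda_i^2 c_i^2.
\]
A direct computation gives the conditional means $\EE[A\mid K_v]=M_1$, $\EE[B\mid K_v]=1$, $\EE[C\mid K_v]=M_2$, and the conditional variances $\Var(A\mid K_v)=2M_2/m_v$, $\Var(B\mid K_v)=2/m_v$, $\Var(C\mid K_v)=2M_4/m_v$.

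I would then finish with a Chebyshev/Slutsky argument on the good event $\Omega_m=\{\sqrt{M_2}/M_1\le C\}\cap\{\sqrt{M_4}/M_2\le C\}$, which has probability tending to $1$ by hypothesis. On $\Omega_m$ the relative variances are controlled, $\Var(A\mid K_v)/M_1^2\le 2C^2/m_v$ and $\Var(C\mid K_v)/M_2^2\le 2C^2/m_v$ (while $\Var(B\mid K_v)\to 0$ unconditionally), so conditional Chebyshev yields $A/M_1\to1$, $B\to1$, $C/M_2\to1$ in probability. Hence
\[
\frac{\cos(\theta_v)}{M_1/\sqrt{M_2}}=\frac{A/M_1}{\sqrt{B}\,\sqrt{C/M_2}}\xrightarrow{pr.}1,
\]
and since $M_1/\sqrt{M_2}\in[0,1]$ is bounded (Cauchy--Schwarz on the moments, $M_1^2\le M_2$), the difference $\cos(\theta_v)-M_1/\sqrt{M_2}$ tends to $0$ in probability, which is the assertion. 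The main obstacle is the bookkeeping around the two independent sources of randomness: $K_v$ (hence $M_1,M_2,M_4$, all of which may vary with $m_v$) and $b_v$. Conditioning on $K_v$ cleanly isolates the Gaussian fluctuations, and the uniform-boundedness hypotheses on $\sqrt{M_2}/M_1$ and $\sqrt{M_4}/M_2$ are precisely what force the relative conditional variances to vanish, so that the random, $m_v$-dependent target $M_1/\sqrt{M_2}$ is approached in probability rather than a fixed constant.
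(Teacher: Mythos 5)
Your proposal is correct and follows essentially the same route as the paper: the same identity $\dot f_v=-K_v b_v$ and worst-case eigenvalue bounds for the deterministic claim, and for the asymptotic claim the same conditioning on $K_v$ with a second-moment/Chebyshev control of the Gaussian quadratic forms (your $A$, $B$, $C$ are exactly the quantities the paper handles via Lemma~\ref{lem:isotropic-spectral} applied to $K_v^{\nicefrac12}$ and $K_v$). Your write-up is if anything slightly more explicit about the fact that the limit $M_1/\sqrt{M_2}$ is itself random and $m_v$-dependent, which the paper leaves implicit.
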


\begin{proof}[Proof of Thm.~\ref{thm:BFA-spectrum}]
By the chain rule, it holds
\[
\dot f_v = -\sum_{\ell\leq v} \eta_\ell \frac{\partial f_v}{\partial w_\ell} \Big( \frac{\partial \Ll}{\partial w_\ell}\Big)^\top =-\sum_{\ell\leq v} \eta_\ell \frac{\partial f_v}{\partial w_\ell} \Big( \frac{\partial f_v}{\partial w_\ell}\Big)^\top \Big( \frac{\partial \Ll}{\partial f_v}\Big)^\top,
\]
hence $\dot f_v = -K_v b_v$. Denoting $K_v^{\nicefrac12}$ the unique psd square-root of $K_v$, it follows
\begin{equation}\label{eq:BFA-BFK}
\cos(\theta_v) = \frac{-b_v^\top \dot f_v}{\Vert \dot f_v\Vert_2 \Vert b_v\Vert_2} = \frac{\Vert K_v^{\nicefrac12} b_v\Vert_2^2}{\Vert K_vb_v\Vert_2 \Vert b_v\Vert_2}.
\end{equation}
The first claim follows from Eq.~\eqref{eq:BFA-BFK} and the worst-case bounds $\Vert K_v b_v\Vert_2\leq \lambda_1 \Vert b_v\Vert_2$ and $\Vert K_v^{\nicefrac12} b_v\Vert_2\geq \sqrt{\lambda_{m_v}} \Vert b_v\Vert_2$. The second claim is related to the trace estimation method via random matrix-vector products~\cite[Chap.~4]{martinsson2020randomized}. We assume without loss of generality that $\E[\Vert b_v\Vert_2^2]=1$ and by Lem.~\ref{lem:isotropic-spectral}, we can write $Z= \Vert K_{v}^{\nicefrac12} b_v \Vert_2^2= a(1+b)$ where $a=\E[Z|K_v]=M_1$ and $\E[b^2]\to 0$ as $m_v\to \infty$. An analogous decomposition holds for $\Vert K_{v}(b_v)\Vert_2^2$ with $a=M_2$ and the second claim follows.
\end{proof}

\begin{lemma}\label{lem:isotropic-spectral}
Let $K\in \RR^{m\times m}$ be a (potentially random) psd matrix and $a\sim \Nn(0,\frac{1}{m}I_m)$ be independent. Then
$
\E [\Vert Ka \Vert_2^2\mid K]=M_2(K)$ and $\Var[\Vert Ka \Vert_2^2\mid K] = \frac{2}{m} M_4(K)$ where $M_p(K) \coloneqq \frac{1}{m}\sum_{i=1}^m \lambda_i^p$ and $\lambda_1,\dots,\lambda_m\geq 0$ are the eigenvalues of $K$.
\end{lemma}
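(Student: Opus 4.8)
The plan is to reduce the quantity $\Vert Ka\Vert_2^2$ to a weighted sum of independent squared scalar Gaussians by diagonalizing $K$ and invoking the rotational invariance of the isotropic Gaussian. Working conditionally on $K$ throughout (so that the eigenvalues $\lambda_i$ and a diagonalizing orthogonal matrix are treated as deterministic), I would write the spectral decomposition $K=U\Lambda U^\top$ with $U\in\RR^{m\times m}$ orthogonal and $\Lambda=\diag(\lambda_1,\dots,\lambda_m)$. Since $K$ is symmetric, $\Vert Ka\Vert_2^2=a^\top K^2a=a^\top U\Lambda^2U^\top a$, and setting $z\coloneqq U^\top a$ gives the clean expression $\Vert Ka\Vert_2^2=\sum_{i=1}^m\lambda_i^2 z_i^2$.

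The key observation I would establish next is that $z\sim\Nn(0,\tfrac{1}{m}I_m)$: because $a\sim\Nn(0,\tfrac{1}{m}I_m)$ and the covariance $\tfrac1m I_m$ is invariant under conjugation by the orthogonal matrix $U^\top$, the coordinates $z_1,\dots,z_m$ are i.i.d.\ $\Nn(0,\tfrac1m)$. This is precisely where conditioning on $K$ matters: it renders $U$ deterministic, so the rotation acts on an independent isotropic Gaussian and preserves its law.

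With $\Vert Ka\Vert_2^2$ written as a sum of independent terms, both moments follow from the one-dimensional moments of $\Nn(0,\tfrac1m)$. For the mean, $\E[z_i^2]=\tfrac1m$ yields $\E[\Vert Ka\Vert_2^2\mid K]=\tfrac1m\sum_i\lambda_i^2=M_2(K)$. For the variance, independence of the $z_i$ gives $\Var[\Vert Ka\Vert_2^2\mid K]=\sum_i\lambda_i^4\,\Var[z_i^2]$, and the standard Gaussian fourth-moment identity $\Var[z_i^2]=\E[z_i^4]-(\E[z_i^2])^2=3(\tfrac1m)^2-(\tfrac1m)^2=2(\tfrac1m)^2$ then delivers $\tfrac{2}{m^2}\sum_i\lambda_i^4=\tfrac{2}{m}M_4(K)$, as claimed.

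There is no substantive obstacle here; the argument is elementary. The only two points requiring a moment of care are the justification that $U^\top a$ remains isotropic Gaussian — immediate from the invariance of $\tfrac1m I_m$ under orthogonal conjugation, and indeed the reason the scaling $\tfrac1m$ was chosen — and the recollection of the Gaussian fourth moment. Since every step holds verbatim for each fixed realization of the spectrum and eigenvectors of $K$, the conditioning on the (possibly random) matrix $K$ is handled uniformly and requires no separate treatment.
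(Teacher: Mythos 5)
Your proof is correct and follows essentially the same route as the paper's: diagonalize $K$, use orthogonal invariance of the isotropic Gaussian to reduce $\Vert Ka\Vert_2^2$ to $\sum_i \lambda_i^2 z_i^2$ with i.i.d.\ $z_i\sim\Nn(0,\nicefrac1m)$, then compute the two moments via $\E[z_i^2]=\nicefrac1m$ and $\Var[z_i^2]=\nicefrac{2}{m^2}$. No gaps.
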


The second claim expresses the BFA in terms of the spread of the spectrum of the BFK, in an asymptotically exact way. Its assumptions hold at random initialization in the large width limit of typical NNs, provided $f_v$ is directly followed by a weight-matrix multiplication in the forward pass, so that $b_v$ is the output of a random matrix/vector multiplication. Asymptotic independence can be guaranteed in quite general contexts, see~\cite{yang2020tensor}. For MLP or ResNets with batch-size one, we show in Section~\ref{sec:properties-MLP-resnets} that $\cos(\theta_v)$ is tightly related to the conditioning of layer-to-layer Jacobians, studied in the \emph{dynamical isometry} literature~\citep{pennington2017resurrecting}.

\section{Ensuring feature learning in scaled NNs}\label{sec:criteria}
\subsection{Properties for scaled NNs} \label{sec:MLP-desiderata}
Consider a sequence of NNs and parameters as in~\eqref{eq:NN-general} with some diverging architectural parameters such as depth or width. We refer to such a sequence as a \emph{scaled NN}. In search of the optimal scaling of NNs, it is crucial to understand how HP scalings influence the properties of the training dynamics. In this section, we discuss the following properties:
\begin{enumerate}
\item[(SP)] \textbf{Signal propagation.} It holds $\Vert f_v\Vert_\RMS=\Theta(1)$ for $v\in [1\: L-1]$. 
\item[(FL)] \textbf{Feature learning.} It holds $\Vert \dot f_{L-1}\Vert_\RMS = \Theta(1)$.
\item[(LD)] \textbf{Loss decay.} It holds $- \dot \Ll = \Theta(1)$.
\item[(BC)] \textbf{Balanced contributions.} It holds $\eta_\ell \Vert \nabla_\ell \Ll\Vert_2^2=\Theta(\eta_{\ell'} \Vert \nabla_{\ell'} \Ll\Vert_2^2)$ for any $\ell,\ell'\in [1\: L]$.
\end{enumerate}

We discuss these specific properties because they are amenable to our tools and enforcing them requires $(L-1)+1+1+(L-1)=2L$ degrees of freedom, which exactly matches the number of free HPs if one counts one scale HP (such as the variance of the weights) and one LR per block. One may wonder if property (BC) is truly desirable: this is the topic of Section~\ref{sec:gradient_stability}, where we adopt a more axiomatic approach and recover, for homogeneous architectures, (a more general version of) property (BC) as a consequence of enforcing \emph{gradient stability}. Also, while enforcing these properties is reasonable when increasing depth and width, they should be rethought for other asymptotics.

Property (SP) specifies $L-1$ scale HPs , but leaves the scale of $f_L$ free. The reason for not including $f_L$ in (SP) is that $\Vert f_L\Vert_\RMS=o(1)$ does not lead to vanishing gradient in general, so this behavior should not be excluded a priori. How should one then fix the scale of the output? The next proposition shows that for property (FL) to hold, the quantity that should be suitably normalized is the norm of the backward pass.
\begin{proposition}\label{prop:backward-crit}
A scaled NN~\eqref{eq:NN-general} satisfies (FL), (LD), and (BC) if and only if
\begin{align}\label{eq:FL}
\Vert b_{L-1}\Vert_\RMS = \Theta\Big(\frac{1}{m_{L-1}\cdot \cos(\theta_{L-1})}\Big)
\end{align}
and
\begin{align}\label{eq:BC}
\forall \ell \in [1\:L],\; \eta_\ell = \Theta\Big(\frac{1}{L\Vert \nabla_\ell \Ll\Vert_2^2}\Big).
\end{align}
\end{proposition}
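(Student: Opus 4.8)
The plan is to reduce all three properties to statements about the nonnegative quantities $g_\ell \coloneqq \eta_\ell \Vert \nabla_\ell \Ll\Vert_2^2$, and then read off the two conditions from the RMS form of the feature speed formula~\eqref{eq:MLP-aligned-update}. First I would record the elementary bookkeeping: $\dot\Ll_{\leq v} = \sum_{\ell\leq v} g_\ell$, property (LD) is the statement $\dot\Ll_{\leq L} = \Theta(1)$ (the total loss decrease), property (BC) says precisely that all the $g_\ell$ are of the same order (uniformly in $\ell$), and condition~\eqref{eq:BC} says $g_\ell = \Theta(1/L)$ for every $\ell$.

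The core combinatorial step I would establish is that (BC) together with (LD) is equivalent to condition~\eqref{eq:BC}. For the forward direction, (BC) lets me write $g_\ell = \Theta(g_1)$ uniformly, so $\sum_{\ell\leq L} g_\ell = \Theta(L\, g_1)$; imposing (LD) forces $g_1 = \Theta(1/L)$, hence $g_\ell = \Theta(1/L)$ for all $\ell$, which is exactly~\eqref{eq:BC}. Conversely, if $g_\ell = \Theta(1/L)$ uniformly, then all $g_\ell$ share the same order, giving (BC), and $\sum_{\ell\leq L} g_\ell = \Theta(L\cdot L^{-1}) = \Theta(1)$ gives (LD). The same bookkeeping yields
\[
\dot\Ll_{\leq L-1} = \sum_{\ell\leq L-1} g_\ell = \Theta\big((L-1)/L\big) = \Theta(1),
\]
using $L-1 = \Theta(L)$; this is the fact I will feed into the feature speed formula.

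It then remains to characterize (FL), assuming~\eqref{eq:BC} holds. Applying Thm.~\ref{thm:main} at $v = L-1$ in the form~\eqref{eq:MLP-aligned-update}, and using $\dot\Ll_{\leq L-1} = \Theta(1)$ from the previous step, I obtain
\[
\Vert \dot f_{L-1}\Vert_\RMS = \frac{\dot\Ll_{\leq L-1}}{\cos(\theta_{L-1})\, m_{L-1}\, \Vert b_{L-1}\Vert_\RMS} = \frac{\Theta(1)}{\cos(\theta_{L-1})\, m_{L-1}\, \Vert b_{L-1}\Vert_\RMS}.
\]
Hence $\Vert\dot f_{L-1}\Vert_\RMS = \Theta(1)$, which is (FL), is equivalent to $\cos(\theta_{L-1})\, m_{L-1}\, \Vert b_{L-1}\Vert_\RMS = \Theta(1)$, i.e. to~\eqref{eq:FL}. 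Chaining the two equivalences gives both directions of the proposition: (FL), (LD), (BC) hold together if and only if~\eqref{eq:FL} and~\eqref{eq:BC} hold.

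The delicate points, rather than the algebra, are two, and I would flag them explicitly. First, the $\Theta$ in (BC) and in~\eqref{eq:BC} must be understood uniformly over $\ell$, since I sum $L$ of these terms, and a non-uniform per-layer constant would break the step $\sum_{\ell\leq L} g_\ell = \Theta(L\, g_1)$. Second, the feature speed formula only defines $\theta_{L-1}$ when $\dot\Ll_{\leq L-1}\neq 0$; I should therefore note that under~\eqref{eq:BC} we have $\dot\Ll_{\leq L-1} = \Theta(1)$, so asymptotically $\theta_{L-1}\in[0,\pi/2[$ is well defined and $\cos(\theta_{L-1})>0$, which legitimizes dividing by it throughout.
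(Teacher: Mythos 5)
Your proposal is correct and follows essentially the same route as the paper's own proof: reduce (LD) and (BC) to the per-layer condition $\eta_\ell\Vert\nabla_\ell\Ll\Vert_2^2=\Theta(1/L)$, then apply the feature speed formula at $v=L-1$ to convert (FL) into the backward-norm condition~\eqref{eq:FL}. Your added remarks on the uniformity of the $\Theta$ constants over $\ell$ and on the well-definedness of $\theta_{L-1}$ are sound refinements of details the paper leaves implicit.
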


\begin{proof}
Property (LD) requires $\sum_{\ell=1}^L \eta_\ell \Vert \nabla_\ell \Ll\Vert_2^2=-\dot \Ll = \Theta(1)$ and (BC) requires the terms in the sum to be balanced, this leads to Eq.~\eqref{eq:BC}. Now by Thm.~\ref{thm:main}, property (FL) requires
\begin{align}
\Vert \dot f_{L-1}\Vert_\RMS = \frac{\sum_{\ell=1}^{L-1} \eta_\ell \Vert \nabla_\ell \Ll \Vert_2^2}{\cos(\theta_{L-1})\cdot m_{L-1}\cdot \Vert b_{L-1}\Vert_\RMS} =  \Theta(1)
\end{align}
which leads to~\eqref{eq:FL}. Conversely, it is clear that Eq.~\eqref{eq:BC} and~\eqref{eq:FL} imply (FL), (LD) and (BC).
\end{proof}

\subsection{Towards automatic HP scaling}\label{sec:FSC-control}
The criterion of Prop.~\eqref{prop:backward-crit}, complemented with the property (SP), suggest a method to automatically adjust the scales and learning rates in any architecture. In general, properties (SP), (FL), (BC) and (LD) can be enforced as follows:
\begin{itemize}
\item \textbf{(SP): Forward layer normalization.} Enforcing property (SP) can be done along with the computation of the forward pass, this is the usual layer normalization.
\item \textbf{(FL): Backward layer normalization.}  Provided $\theta_{L-1}$ is known or measured, Eq.~\eqref{eq:FL}  can be enforced via a \emph{backward} analog to layer normalization: one inserts a scaling factor in the forward pass between $f_{L-1}$ and $f_L$, adjusted so that Eq.~\eqref{eq:FL} holds.
\item
\textbf{(BC) \& (LD): Scale invariant learning rates.}
Directly tune the LRs via Eq.~\eqref{eq:BC}.
\end{itemize}

We refer to the resulting scaling as FSC as it normalizes the \textbf{F}orward pass, the \textbf{S}ensitivities and the \textbf{C}ontributions. Let us make some observations regarding the scale invariant LRs:
\begin{itemize}[leftmargin=*]
\item \textbf{Link with Polyak step-size.}  In convex optimization, to minimize a convex and Lipschitz continuous function $f:\RR^d\to \RR$ such that $\min_{x\in \RR^d} f(x)=0$, the Polyak-step-size~\citep{polyak1987introduction, hazan2019revisiting} for the GD algorithm $x_{t+1}=x_t-\eta_t\nabla f(x_t)$ is given by
$
\eta_t =\frac{f(x_t)}{\Vert \nabla f(x_t)\Vert^2_2}.
$
With this step-size, GD achieves the optimal convergence rate for first order methods over the class of convex and Lipschitz functions. Eq.~\eqref{eq:BC} require a layerwise version of this step-size schedule.
\item \textbf{Interplay with adaptive methods (Adagrad~\citep{duchi2011adaptive}, ADAM~\citep{kingma2014adam}).} Adaptive gradient method typically divide the gradient by a quantity which grows \emph{linearly} rather than quadratically with the norm of the gradient, such as
$
\delta W_\ell = - \tilde \eta_\ell \cdot \frac{\nabla_{\ell}\Ll }{\Vert \nabla_{\ell}\Ll \Vert_\RMS}.
$
For such an algorithm, properties (BC) and (LD), suggests the LR $
\tilde \eta_\ell =\Theta( \frac{ \Vert \nabla_{\ell} \Ll \Vert_\RMS}{L\cdot \Vert \nabla_{\ell} \Ll \Vert^2_2})
$, in place of Eq.~\eqref{eq:BC}.

\item \textbf{Scale invariance and ${-2}$ homogeneity.} These LRs arise naturally when one wants to make the gradient descent invariant to how scale is enforced (via initialization scale or via scaling factors). We show in App.~\ref{app:reparam-LR} that any choice of LR that leads to this invariance must be a positively homogeneous function of the layer-wise gradient of degree $-2$, as in Eq.~\eqref{eq:BC}. We also show in Prop.~\ref{prop:scale-invariance} that these LRs make homogeneous architectures invariant to the choice of layer-wise scalings $\sigma_1,\dots,\sigma_L$, given a fixed global scale $\prod_{\ell=1}^L \sigma_\ell$.
\end{itemize}

\section{Scaling width and depth of MLPs and ResNets}
\label{sec:properties-MLP-resnets}

\subsection{BFA for single input MLPs and ResNets at initialization} 

\paragraph{Multilayer Perceptron} Consider a ReLU MLP architecture with a single input $x=g_0\in \RR^d$ and a forward pass given, for $\ell\in [1\: L-1]$, by
\begin{align}\label{eq:MLP}
f_{\ell} = W_\ell g_{\ell-1},&& g_{\ell}= \phi(f_\ell), && f_L=W_Lg_{L-1},&& \Ll=\mathrm{loss}(f_L)
\end{align}
where $\phi(u)=\max\{0,u\}$ is the ReLU nonlinearity and acts entrywise on vectors. The architecture HPs are  the input width $m_0=d$, the widths of the hidden layers $m_1=\dots = m_{L-1}=m$ (assumed equal), the output width $m_L=k$. The trainable parameters are $\forall \ell \in [1\:L],\; W_{\ell}\in \RR^{m_{\ell}\times m_{\ell-1}}$. 
Such NNs are of the form~\eqref{eq:NN-general} and are thus covered by Thm.~\ref{thm:main}. Let us study their properties at random initialization under the following assumptions:
\begin{itemize}
\item[(H1)] the weights $W_\ell$ are independent $\Nn(0,\sigma_\ell^2)$ random variables for $\ell \in [1 \: L]$.
\item[(H2)] either $k = \Theta(1)$ or the loss is linear.
\end{itemize}
In this setting, the following statements gather consequences of results from the literature on random NNs and of Thm.~\ref{thm:BFA-spectrum} to derive the scale of the forward and backward passes and the BFA. We require (H2) as a technical assumption to avoid dealing with cases where $b_L$ strongly depends on the forward pass, where different scalings may arise\footnote{Say, if $\mathrm{loss}(f)=\frac12 \Vert f\Vert_2^2$, we have $\Vert b_{L-1}\Vert_\RMS = \Vert W_L^\top W_L f_{L-1}\Vert_\RMS =\Theta(\sigma_L \max\{1,\sqrt{k/m}\}\Vert b_L\Vert_2)$ (by Lem.~\ref{lem:isotropic-spectral} and properties of the Marcenko-Pastur law), while under (H2) we have  $\Vert b_{L-1}\Vert_\RMS=\Theta(\sigma_L \Vert b_L\Vert_2).$}.

In what follows we write $A=\Theta(B)$ when there exists $c,C>0$ independent of $d,m,k,L, \Vert x\Vert_2$ and $\Vert b_L\Vert_2$ such that the probability that $A/B \in [c,C]$ goes to $1$ in the specified asymptotic. The new result in the following proposition is the BFA estimate, which relies crucially on a delicate computation due to~\citep{jelassi2023depth}. 

\begin{proposition}[Large width and depth MLP]\label{prop:scales-RELU-MLP}
Assume (H1-2) and for $\ell \in [1\: L-1]$, let $\sigma_{\ell}=\sqrt{2/m_{\ell-1}}$. As $m\to \infty$, it holds
\begin{align}\label{eq:scale-backward}
 \Vert f_v\Vert_\RMS = \Theta(\Vert x\Vert_\RMS), &&
 \Vert  b_v\Vert_2 = \Theta(\sqrt{m}\, \sigma_L\, \Vert b_L\Vert_2).
\end{align}
Moreover, if (BC) holds then $\cos(\theta_v) = \Theta(v^{-1/2})$.
\end{proposition}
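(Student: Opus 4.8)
The plan is to handle the three claims separately: the forward scale and the backward scale are standard signal-propagation facts under He initialization, while the backward-feature-angle estimate is the substantive part and reduces, via Thm.~\ref{thm:BFA-spectrum}, to a spectral-moment computation for products of random ReLU Jacobians.

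For the forward scale, I would exploit that with $\sigma_\ell^2 = 2/m_{\ell-1}$ the factor $2$ in the weight variance exactly compensates the factor $\tfrac12$ lost by the ReLU: conditionally on $g_{\ell-1}$ one has $\EE[\Vert f_\ell\Vert_2^2] = \sigma_\ell^2 m_\ell \Vert g_{\ell-1}\Vert_2^2$ and $\Vert g_{\ell-1}\Vert_2^2 \approx \tfrac12\Vert f_{\ell-1}\Vert_2^2$, so $\Vert f_\ell\Vert_\RMS$ is preserved up to constants; concentration in the width limit (as in~\citep{poole2016exponential,hanin2018neural}) gives $\Vert f_v\Vert_\RMS=\Theta(\Vert x\Vert_\RMS)$. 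For the backward scale, I would use the back-propagation recursion $b_{\ell-1}=D_{\ell-1}W_\ell^\top b_\ell$ with $D_{\ell-1}=\diag(\phi'(f_{\ell-1}))$: the same variance-$2$/ReLU-mask cancellation gives $\Vert b_{\ell-1}\Vert_2=\Theta(\Vert b_\ell\Vert_2)$ for intermediate layers, whereas the top transition $b_{L-1}=D_{L-1}W_L^\top b_L$ with $W_L\in\RR^{k\times m}$ injects one factor $\sqrt m\,\sigma_L$ (here (H2) is what guarantees $b_L$ does not co-rotate with the forward pass, so this matrix--vector product concentrates). Chaining these yields $\Vert b_v\Vert_2=\Theta(\sqrt m\,\sigma_L\Vert b_L\Vert_2)$.

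The heart of the argument is the BFA claim. I would first put the BFK in Jacobian form: since $\partial f_v/\partial (W_\ell)_{ij}$ equals $g_{\ell-1,j}$ times the $i$-th column of $J_{\ell\to v}=\partial f_v/\partial f_\ell = W_vD_{v-1}\cdots W_{\ell+1}D_\ell$, summing over $(i,j)$ gives
\begin{equation*}
K_v=\sum_{\ell\le v}\eta_\ell\,\Vert g_{\ell-1}\Vert_2^2\, J_{\ell\to v}J_{\ell\to v}^\top.
\end{equation*}
Next I would simplify the weights: from $\nabla_\ell\Ll=b_\ell g_{\ell-1}^\top$ we get $\Vert\nabla_\ell\Ll\Vert_2^2=\Vert b_\ell\Vert_2^2\Vert g_{\ell-1}\Vert_2^2$, so (BC) in the form~\eqref{eq:BC} forces $\eta_\ell\Vert g_{\ell-1}\Vert_2^2=\Theta(1/(L\Vert b_\ell\Vert_2^2))$, and by the backward scale just established this is a \emph{common} constant $c=\Theta(1/(Lm\sigma_L^2\Vert b_L\Vert_2^2))$ across all $\ell\le v$. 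Thus $K_v$ equals, up to a constant factor, $c\sum_{\ell\le v}J_{\ell\to v}J_{\ell\to v}^\top$, with uniform weights. By Thm.~\ref{thm:BFA-spectrum}, applied for each fixed $v$ as $m\to\infty$ (its moment-ratio hypotheses are bounded in $m$ at fixed $v$, and (H2) again supplies the Gaussianity/independence of $b_v$), it remains to compute $M_1=\tfrac1m\tr K_v$ and $M_2=\tfrac1m\tr K_v^2$ and to read off the $v$-dependence of $M_1/\sqrt{M_2}$; the $\Theta(v^{-1/2})$ scaling is then the width-then-depth statement.

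For the moments I would use the nesting $J_{\ell\to v}=J_{\ell'\to v}J_{\ell\to\ell'}$ (for $\ell\le\ell'$) together with $\EE[J_{\ell\to v}J_{\ell\to v}^\top]\approx I_m$, which gives $\tr(J_{\ell\to v}J_{\ell\to v}^\top)\approx m$ and hence $M_1=\Theta(cv)$. For $M_2=c^2\tfrac1m\sum_{\ell,\ell'}\tr(J_{\ell\to v}J_{\ell\to v}^\top J_{\ell'\to v}J_{\ell'\to v}^\top)$, the nesting reduces each summand to $\tfrac1m\tr(\Sigma'(A^\top A)^2)$ with $A=J_{\ell'\to v}$ and $\Sigma'$ an independent lower-Jacobian Gram matrix; by asymptotic freeness this tends to $M_1(\Sigma')\,M_2(A^\top A)=\Theta(\min(v-\ell,v-\ell')+1)$. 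Summing the $\min$ over the $v^2$ index pairs gives $M_2=\Theta(c^2v^3)$, dominated by off-diagonal correlations rather than the diagonal $\Theta(c^2v^2)$, so $M_1/\sqrt{M_2}=\Theta(cv)/\Theta(cv^{3/2})=\Theta(v^{-1/2})$, as claimed. The main obstacle is exactly this moment growth: establishing that a depth-$d$ ReLU Jacobian has $\tfrac1m\tr((JJ^\top)^2)=\Theta(d)$ (the Fuss--Catalan growth $d+1$ in the free/linear model) and that the correlated ReLU masks $D_\ell$ do not destroy this behavior or the asymptotic freeness of the nested products. This is precisely the delicate computation imported from~\citep{jelassi2023depth}; the surrounding steps (uniform weights from (BC), the $\min$-summation, and the order of limits) are then routine.
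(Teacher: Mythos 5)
Your proposal is correct in outline but takes a genuinely different route from the paper's. The paper does not compute the spectral moments of $K_v$ at all: it imports the end-to-end result of \citet{jelassi2023depth} --- that under the MF+$\mu$P scaling ($\sigma_L=1/m$, $\eta_\ell=\Theta(L^{-3/2})$) one has $\Vert \dot g_{L-1}\Vert_\RMS=\Theta(1)$ --- computes in that scaling $\sum_\ell \eta_\ell\Vert\nabla_\ell\Ll\Vert_2^2=\Theta(L^{-1/2})$ and $m\Vert z_{L-1}\Vert_\RMS=\Theta(1)$, and then \emph{inverts} the feature speed formula (Thm.~\ref{thm:main}) to solve for $\cos(\theta_{L-1})$. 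You instead go through Thm.~\ref{thm:BFA-spectrum} directly: write $K_v\propto\sum_{\ell\le v}J_{\ell\to v}J_{\ell\to v}^\top$ after using (BC) to equalize the weights $\eta_\ell\Vert g_{\ell-1}\Vert_2^2$, then compute $M_1=\Theta(cv)$ and $M_2=\Theta(c^2v^3)$ via the Fuss--Catalan growth $\tfrac1m\tr\bigl((JJ^\top)^2\bigr)=\Theta(d)$ and asymptotic freeness of the nested Jacobian products. This is essentially the content of the paper's Lemma~\ref{lem:spectra-MLP}, but note that even there the authors do not carry out the moment computation --- they circle back to the conclusion of Prop.~\ref{prop:scales-RELU-MLP}. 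So the difference is real: the paper's route uses \citet{jelassi2023depth} as a black box for a feature-update magnitude and needs only the elementary Thm.~\ref{thm:main}, while your route needs the hypotheses of Thm.~\ref{thm:BFA-spectrum} (Gaussianity and independence of $b_v$ and bounded moment ratios, which the paper verifies via \cite{hanin2020products}) plus the random-matrix moment estimates that you correctly flag as the delicate imported step. Your route buys a direct proof of $\cos(\theta_v)=\Theta(v^{-1/2})$ for every $v$ at once and does not hinge on any particular choice of learning rates; its cost is that you attribute to \citet{jelassi2023depth} a trace-moment statement in a form they may not literally prove, whereas the paper only invokes their stated end-to-end result. (Two small points: the paper explicitly sets $\eta_1=0$ to discard the asymptotically negligible input-layer contribution to $K_v$, which you should also do since $g_0$ has dimension $d$ rather than $m$; and your treatment of the forward/backward scales matches the paper's, which simply cites the signal-propagation literature.)
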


\paragraph{ResNets} 
Consider now a ResNet with a \emph{branch scale} parameter $\beta\in [0,1]$, as in~\cite{li2021future}: with a single input $x = f_0 \in \RR^{d}$, the forward pass is given, for $\ell\in  [2:L-1]$, by 
\begin{align}\label{eq:resnet}
 f_{1}  = W_1x, && f_{\ell} =\sqrt{1-\beta^2} f_{\ell-1}+\beta W_\ell \phi( f_{\ell-1}),&&
f_{L} =W_Lf_{L-1},&&
\Ll  =\mathrm{loss}( f_{L})
\end{align}
where $\phi(u)=u$ in our theoretical results. The architecture HPs are the input width $m_0=d$, the widths of the hidden layers $m_1=\dots = m_{L-1}=m$, the output width $m_L=k$.  The trainable parameters are $\forall \ell \in [1\:L],\; W_{\ell}\in \RR^{m_{\ell}\times m_{\ell-1}}$. When $\beta=1$, we recover a MLP.

Here we limit ourselves to the case of linear activation where we can directly apply a result from~\citep{marion2024deep} to estimate the BFA. We believe that the same result and proof technique extend to the ReLU activation and other variants of ResNets; these extensions are left to future work.

\begin{proposition}[Large width and depth linear ResNet]\label{prop:scales-linear-resnet}
Assume (H1-2), let $\phi(x)=x$, $\beta = O(1/\sqrt{L})$ and for $\ell \in [1:L-1]$, let $\sigma_{\ell}=\Theta(1/\sqrt{m_{\ell-1}})$. As $m \to \infty$ it holds:
\begin{align}\label{eq:scale-backward-resnet}
 \Vert f_\ell\Vert_\RMS = \Theta( \Vert x\Vert_\RMS), &&
 \Vert  b_\ell\Vert_2 = \Theta\Big(\sqrt{m}\sigma_L \Vert b_L\Vert_2\Big).
\end{align}
Moreover, if (BC) holds then $\cos(\theta_v) = \Theta(1)$.
\end{proposition}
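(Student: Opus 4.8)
The plan is to use the linearity of the forward pass to reduce everything to products of the transition matrices $A_\ell := \sqrt{1-\beta^2}\,I + \beta W_\ell$ (for $\ell\in[2\:L-1]$), so that $f_v = A_v\cdots A_2 W_1 x$ and, dually, $b_v = A_{v+1}^\top\cdots A_{L-1}^\top W_L^\top b_L$. The crucial structural fact is that each block is isometric in expectation: since the $W_\ell$ are centered with iid $\Nn(0,\sigma_\ell^2)$ entries and $m\sigma_\ell^2=\Theta(1)$, the cross term drops and $\E[A_\ell^\top A_\ell] = (1-\beta^2)I + \beta^2 m\sigma_\ell^2 I = \Theta(1)\cdot I$ (exactly $I$ when $\sigma_\ell=1/\sqrt m$). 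First I would deduce the scales~\eqref{eq:scale-backward-resnet}: the first and last layers give $\Vert W_1 x\Vert_\RMS = \Theta(\Vert x\Vert_\RMS)$ and $\Vert W_L^\top b_L\Vert_2 = \Theta(\sqrt m\,\sigma_L\Vert b_L\Vert_2)$ by a second-moment computation, after which the products of the $A_\ell$ (resp.\ $A_\ell^\top$) preserve these scales. The role of $\beta = O(1/\sqrt L)$ is to keep the cumulative depthwise effect of the $L$ blocks bounded (the per-layer correction being $O(\beta^2)=O(1/L)$), so that both norms concentrate at their $\Theta$-scale in the width-then-depth limit; I would import this concentration from the signal-propagation analysis of linear ResNets.

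For the BFA I would first make the BFK explicit. Differentiating $f_v = J_{v\leftarrow\ell}A_\ell f_{\ell-1}$, with $J_{v\leftarrow\ell}:=\partial f_v/\partial f_\ell = A_v\cdots A_{\ell+1}$, gives $\partial f_v/\partial (W_\ell)_{ab} = \beta\,(f_{\ell-1})_b\,J_{v\leftarrow\ell}e_a$, so that
\[
K_v = \sum_{\ell\le v}\eta_\ell\,\beta^2\Vert f_{\ell-1}\Vert_2^2\; J_{v\leftarrow\ell}J_{v\leftarrow\ell}^\top
\]
(the $\ell=1$ term carries $\Vert x\Vert_2^2$ and no $\beta^2$, a harmless boundary correction). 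Using $\nabla_\ell\Ll = \beta\, b_\ell f_{\ell-1}^\top$, hence $\Vert\nabla_\ell\Ll\Vert_2^2 = \beta^2\Vert b_\ell\Vert_2^2\Vert f_{\ell-1}\Vert_2^2$, together with the scales above (which make $\Vert b_\ell\Vert_2$ and $\Vert f_{\ell-1}\Vert_2$ $\Theta$-constant in $\ell$), property (BC) via~\eqref{eq:BC} forces the coefficients $\eta_\ell\beta^2\Vert f_{\ell-1}\Vert_2^2 = \Theta(1/(L\Vert b_\ell\Vert_2^2))$ to be balanced across $\ell$. Since $\cos(\theta_v) = M_1/\sqrt{M_2}$ is invariant under an overall rescaling of $K_v$, this reduces the problem to the spectral moments of $\tilde K_v := \frac1L\sum_{\ell\le v}J_{v\leftarrow\ell}J_{v\leftarrow\ell}^\top$, whose mean spectrum satisfies $M_1=\Theta(1)$ for $v\asymp L$.

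By Thm.~\ref{thm:BFA-spectrum}, the target $\cos(\theta_v)=\Theta(1)$ is then equivalent to the moment ratio $M_1/\sqrt{M_2}$ being bounded away from $0$, i.e.\ to the spectrum of $\tilde K_v$ being \emph{approximately flat} (well-conditioned) rather than heavy-tailed. This is precisely the dynamical-isometry phenomenon that degrades as $v^{-1/2}$ for ReLU MLPs (Prop.~\ref{prop:scales-RELU-MLP}) but is restored by the residual structure: for $\beta=O(1/\sqrt L)$ the Jacobian products $J_{v\leftarrow\ell}$ have singular values concentrated near a fixed scale, so $\sum_\ell J_{v\leftarrow\ell}J_{v\leftarrow\ell}^\top$ has a non-degenerate limiting spectrum. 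Here I would invoke the result of~\cite{marion2024deep}, which characterizes the limiting (width-then-depth) spectral distribution of these products, to conclude $M_1/\sqrt{M_2}=\Theta(1)$ and to verify the regularity conditions of Thm.~\ref{thm:BFA-spectrum} (namely that $\sqrt{M_2}/M_1$ and $\sqrt{M_4}/M_2$ are uniformly bounded).

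The hard part will be twofold. Analytically, it is the control of the off-diagonal ($\ell\neq\ell'$) contributions to $M_2 = \frac1m\tr(\tilde K_v^2)$: the diagonal already yields $M_1$, and it is the cross-terms $\tr(J_{v\leftarrow\ell}J_{v\leftarrow\ell}^\top J_{v\leftarrow\ell'}J_{v\leftarrow\ell'}^\top)$ — measuring how strongly Jacobians at different depths overlap — that decide whether the spectrum stays flat; this is exactly the free-probability computation supplied by~\cite{marion2024deep}, which I would cite rather than reprove. The secondary obstacle is checking the hypotheses of Thm.~\ref{thm:BFA-spectrum}, that $b_v$ is asymptotically Gaussian and independent of $K_v$. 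Independence is structurally clean, since $K_v$ is a function of $W_1,\dots,W_v$ while $b_v$ is a function of $W_{v+1},\dots,W_L$ and $b_L$; the only coupling is through the dependence of $b_L$ on the earlier weights, which is where hypothesis (H2) ($k=\Theta(1)$ or linear loss) makes the coupling asymptotically negligible, while Gaussianity follows from $b_v$ being a random-matrix image of $b_L$ (cf.~\cite{yang2020tensor}).
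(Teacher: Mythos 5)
Your scale estimates and your explicit expression for the BFK, $K_v=\sum_{\ell\le v}\eta_\ell\beta^2\Vert f_{\ell-1}\Vert_2^2\,J_{v\leftarrow\ell}J_{v\leftarrow\ell}^\top$ with $J_{v\leftarrow\ell}=\partial f_v/\partial f_\ell$, match the paper, as does the key external input: the result of \cite{marion2024deep} on products of residual blocks. But for the BFA you take a genuinely different and substantially harder route, and it has a gap. The paper does \emph{not} use the asymptotic moment formula $\cos(\theta_v)\to M_1/\sqrt{M_2}$; it uses the \emph{first}, deterministic claim of Thm.~\ref{thm:BFA-spectrum}, $\cos(\theta_v)\ge\lambda_{\min}(K_v)/\lambda_{\max}(K_v)$. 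What \cite[Lem.~3]{marion2024deep} actually supplies is that $\sigma_{\min}(J_{v\leftarrow\ell})$ and $\sigma_{\max}(J_{v\leftarrow\ell})$ are both $\Theta(1)$ uniformly in $\ell$; since $K_v$ is a nonnegative combination of the psd matrices $J_{v\leftarrow\ell}J_{v\leftarrow\ell}^\top$, one gets $\lambda_{\min}(K_v)\ge\sum_\ell c_\ell\,\sigma_{\min}(J_{v\leftarrow\ell})^2$ and $\lambda_{\max}(K_v)\le\sum_\ell c_\ell\,\sigma_{\max}(J_{v\leftarrow\ell})^2$ with the same coefficients $c_\ell=\eta_\ell\beta^2\Vert g_{\ell-1}\Vert_2^2$, hence $\lambda_{\min}(K_v)/\lambda_{\max}(K_v)=\Theta(1)$ with no distributional assumption on $b_v$ and no control of cross-terms. (Note this also does not need (BC) in any essential way, since the ratio of the two sums is $\Theta(1)$ term by term.)

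The gap in your route is twofold. First, the second claim of Thm.~\ref{thm:BFA-spectrum} requires $b_v$ to be Gaussian and independent of $K_v$; independence is indeed clean here, but Gaussianity is not: $b_v=A_{v+1}^\top\cdots A_{L-1}^\top W_L^\top b_L$ with $A_\ell^\top=\sqrt{1-\beta^2}\,I+\beta W_\ell^\top$, so beyond the first step $W_L^\top b_L$ the skip connections produce sums of products of independent Gaussian matrices applied to $b_L$, which are isotropic but not Gaussian. You would need to extend Lem.~\ref{lem:isotropic-spectral} to this class of vectors, which you have not done. Second, you defer the cross-term computation $\tr(J_{v\leftarrow\ell}J_{v\leftarrow\ell}^\top J_{v\leftarrow\ell'}J_{v\leftarrow\ell'}^\top)$ to \cite{marion2024deep}, but that reference gives uniform singular-value (conditioning) bounds on the individual Jacobian products, not a limiting spectral distribution of the summed kernel $\tilde K_v$; as written, the step ``conclude $M_1/\sqrt{M_2}=\Theta(1)$'' is not actually supported by the cited result. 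Both issues disappear if you replace the moment-ratio argument by the worst-case eigenvalue-ratio bound, which is exactly what the uniform conditioning of the $J_{v\leftarrow\ell}$ is good for.
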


\paragraph{Numerical experiments} We consider\footnote{Link to the Julia code to reproduce the experiments: \url{https://github.com/lchizat/2023-BAFU}}
 one GD step in the model~\eqref{eq:resnet} with ReLU nonlinearity, without training $W_1$ (input dimension $d=10$, output dimension $k=1$, master LR $\delta t=0.001$). Fig.~\ref{fig:alignment}, represent BFA, computed via $\theta_v \approx \arccos(-b_v^\top \delta f_v)$ where $\delta f_v$ is the change of feature $f_v$ after one GD step. The results are consistent with Prop.~\ref{prop:scales-RELU-MLP} and~\ref{prop:scales-linear-resnet}. Interestingly, the last plot suggests that there exists a function $\varphi:\RR_+\to {]0,\pi/2[}$ such that for a branch scale $\beta = c/\sqrt{L}$, the BFA converges to $\varphi(c)$ (it can be conjectured numerically that $\cos(\varphi(c))\approx c^{-1/2}$ for $c\gg 1$).

\begin{figure}
\centering
\subfigure[BFA vs.~layer ($L=200$)]{
\includegraphics[width=0.30\linewidth]{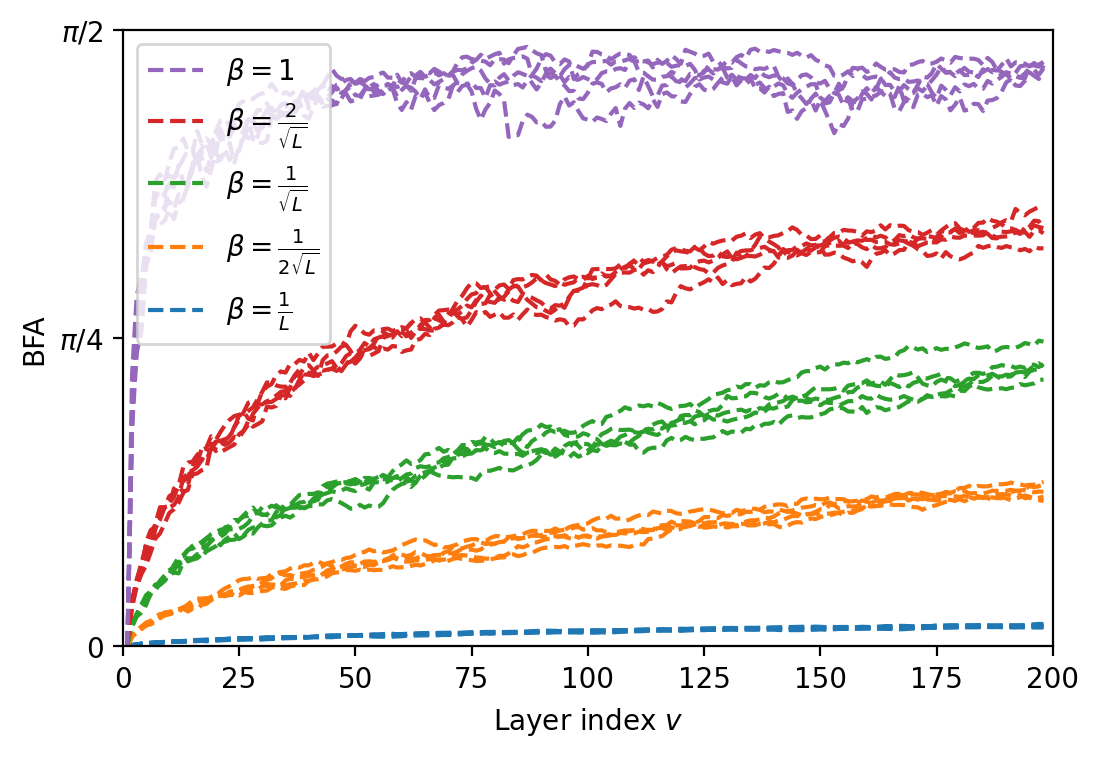}
}
\hfill
\subfigure[Output BFA vs.~depth]{
\includegraphics[width=0.30\linewidth]{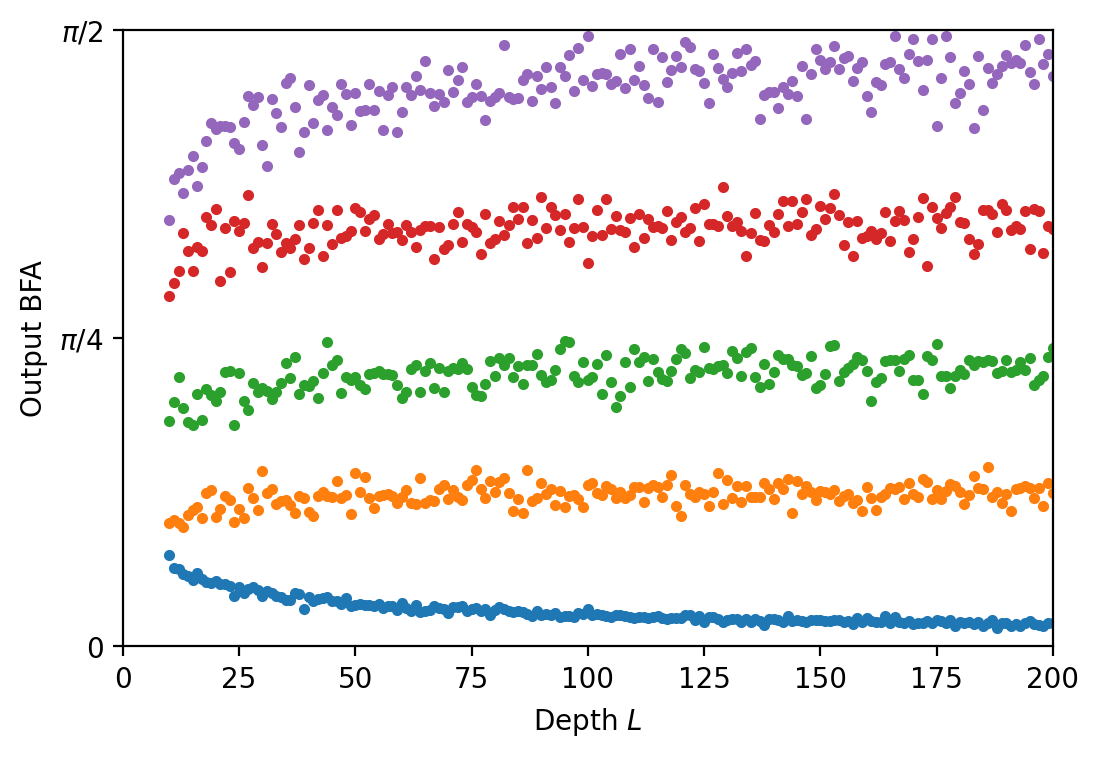}
}
\hfill
\subfigure[Output BFA vs.~scale]{
\includegraphics[width=0.30\linewidth]{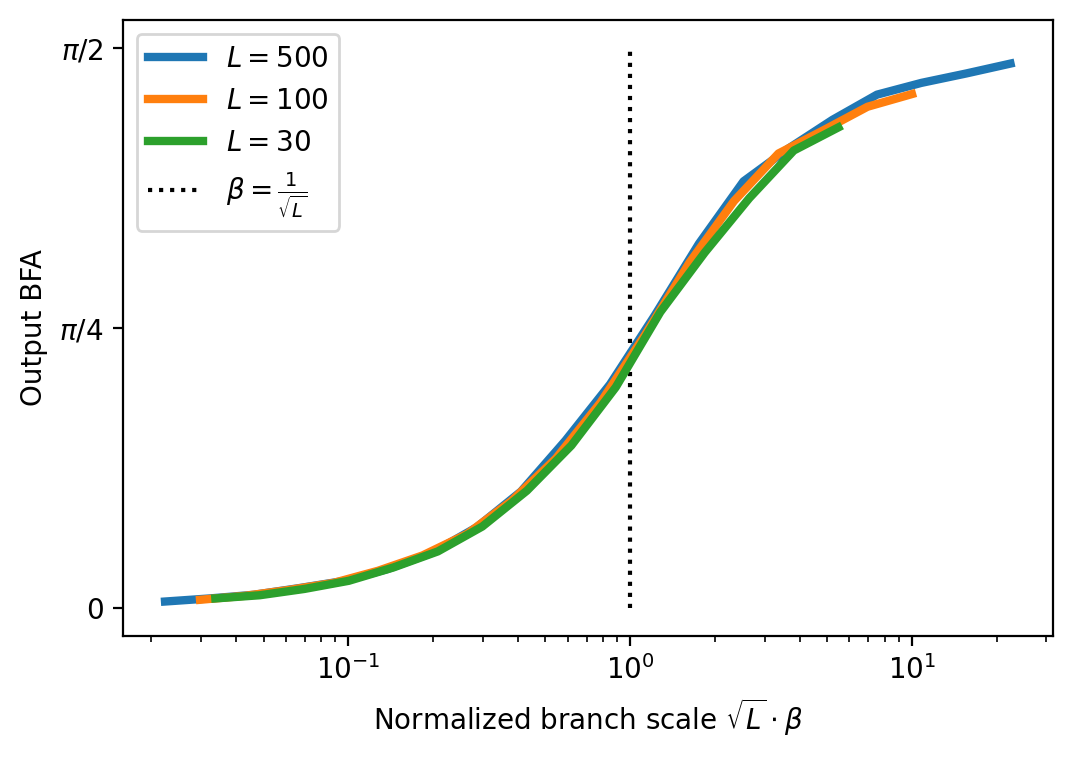}
}
\caption{Backward-Feature Angle (BFA) $\theta_v$ observed at initialization in MLPs ($\beta=1$) and ResNets (width $m=200$), for a few random realizations. (a) for all architectures, BFA $\theta_v$ varies in the first few layers and then stabilizes. (b) BFA at output layer $\theta_{L-1}$ is asymptotically independent of depth, with a non-trivial angle only when $\beta\propto \nicefrac{1}{\sqrt{L}}$ (same color scheme as (a)). (c) for a branch scale $\beta =\nicefrac{c}{\sqrt{L}}$, the factor $c$ directly determines asymptotic output BFA $\theta_{L-1}$ (averaged over $5$ draws).}\label{fig:alignment}
\end{figure}

\subsection{Characterizing HP scalings for MLPs}\label{subsec:scaled-MLP}
Let us now discuss specific choices of HP scalings for single-input MLP architectures as in Eq.~\eqref{eq:MLP} (or Eq.~\eqref{eq:resnet} with $\beta=1$) and at initialization. We consider $6$ HPs: the scale of initialization $\sigma_1$ and LR $\eta_1$ of the input layer, the scale $\sigma_{\hid} \coloneqq \sigma_2=\dots=\sigma_{L-1}$ and LRs $\eta_{\hid}\coloneqq \eta_2=\dots=\eta_{L-1}$ of the hidden layers, and the scale $\sigma_L$ and LR $\eta_L$ of the output layer. The HP scalings mentioned in the next theorem are the following (see Table~\ref{tab:scalings-MLP}):
\begin{itemize}
\item \textbf{NTK}: the standard scaling with LRs adjusted to satisfy (LD) and (BC)~\citep{jacot2018neural};
\item \textbf{MF+$\mu$P}: the scaling proposed in~\citep{jelassi2023depth} constructed by imposing the so-called ``mean-field'' output scale $\sigma_L\propto 1/m$ and then enforcing (FL) by adjusting the learning rates;
\item \textbf{FSC}: the HP scaling singled-out by Prop.~\ref{thm:scalings-MLP}, obtained by adjusting the \textbf{F}orward scales, \textbf{S}ensitivities, and \textbf{C}ontributions.
\end{itemize}
The properties of HP scalings depend on $\Vert x\Vert_2$ and $\Vert b_L\Vert_2$. We consider two typical settings:
\begin{itemize}
\item (Dense) Where $\Vert x\Vert_2 =\sqrt{d}$ and $\Vert  b_L\Vert_2=\frac{1}{\sqrt{k}}$. This is representative of a dense whitened input and a RMS loss $\loss(f_L) = \Vert f_L-y\Vert^2_2/k$ for some dense signal $y\in \RR^k$ with $\Vert y\Vert_\RMS =\Theta(1)$ as, e.g., in image generation applications. 
\item (Sparse) Where $\Vert x\Vert_2 =1$ and $\Vert  b_L\Vert_2=1$. This is representative of a one-hot encoding input and the multiclass logistic loss (aka cross-entropy where $\Vert  b_L\Vert_2=\Theta(\log(k))$). This setting is typical of natural language processing tasks.
\end{itemize}
The scalings are reported in Table~\ref{tab:scalings-MLP}. We have also introduced scalings in terms of output width $k$ for \textbf{NTK} and \textbf{MF + $\mu$P} to ensure a non-degenerate behavior as $k\gg 1$, although these are generally not written in the literature.

\begin{proposition}[MLP scalings]\label{thm:scalings-MLP}
Under the assumptions of Prop.~\ref{prop:scales-RELU-MLP}, the following hold at random initialization:
\begin{itemize}
\item[(i)] The scaling \textbf{MF+$\mu$P} satisfies (SP), (BC) , (FL) but not (LD);
\item[(ii)] The scaling \textbf{NTK} satisfies (SP), (BC), (LD) but not (FL);
\item[(iii)] Properties (SP), (BC), (LD), (FL) hold if and only if the scaling is \textbf{FSC}.
\end{itemize}
\end{proposition}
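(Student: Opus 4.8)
The plan is to reduce all four properties, for each of the three scalings, to two explicitly computable quantities at random initialization: the per-layer gradient norms $\Vert \nabla_\ell \Ll\Vert_2$, which govern (BC) and (LD), and the backward RMS scale $\Vert b_{L-1}\Vert_\RMS$, which, through the feature speed formula, governs (FL). The starting observation is that all three scalings share the He hidden initialization $\sigma_\ell=\sqrt{2/m_{\ell-1}}$ assumed in Prop.~\ref{prop:scales-RELU-MLP}, so (SP) holds for each of them via the forward-scale estimate $\Vert f_v\Vert_\RMS=\Theta(\Vert x\Vert_\RMS)$ established there; the three scalings differ only in the output scale $\sigma_L$ and in the learning rates, and it is these that determine (FL), (LD) and (BC).

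First I would compute the gradient norms. Since $f_\ell=W_\ell g_{\ell-1}$ gives $\nabla_\ell\Ll=b_\ell g_{\ell-1}^\top$, we have $\Vert \nabla_\ell\Ll\Vert_2=\Vert b_\ell\Vert_2\,\Vert g_{\ell-1}\Vert_2$. Plugging in the backward scale $\Vert b_\ell\Vert_2=\Theta(\sqrt m\,\sigma_L\Vert b_L\Vert_2)$ and the forward scales of Prop.~\ref{prop:scales-RELU-MLP} (with the boundary layers treated separately, using $\Vert g_0\Vert_2=\Vert x\Vert_2=\sqrt d\,\Vert x\Vert_\RMS$ at the input, which differs from the hidden value $\Theta(\sqrt m\,\Vert x\Vert_\RMS)$ by the factor $\sqrt{d/m}$, and $\Vert \nabla_L\Ll\Vert_2=\Vert b_L\Vert_2\Vert g_{L-1}\Vert_2$ at the output) yields closed-form $\Theta$-expressions for $\Vert\nabla_1\Ll\Vert_2$, $\Vert\nabla_\hid\Ll\Vert_2$ and $\Vert\nabla_L\Ll\Vert_2$. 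These estimates are unconditional, so I can use them to verify (BC) directly for each scaling from its tabulated learning rates in Table~\ref{tab:scalings-MLP}, and only then invoke the \emph{conditional} BFA estimate $\cos(\theta_{L-1})=\Theta(L^{-1/2})$ of Prop.~\ref{prop:scales-RELU-MLP}, whose hypothesis (BC) has now been established; this ordering avoids any circularity between (BC) and the BFA.

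For part (iii) I would then apply Prop.~\ref{prop:backward-crit}: the conjunction of (FL), (LD) and (BC) is equivalent to the two conditions \eqref{eq:FL} and \eqref{eq:BC}. Condition \eqref{eq:FL} together with $\Vert b_{L-1}\Vert_\RMS=\Theta(\sigma_L\Vert b_L\Vert_2)$ and $\cos(\theta_{L-1})=\Theta(L^{-1/2})$ pins the output scale to $\sigma_L=\Theta\big(\sqrt L/(m\Vert b_L\Vert_2)\big)$, while \eqref{eq:BC} combined with the three gradient norms pins the ratios and overall magnitude of $\eta_1,\eta_\hid,\eta_L$; a degrees-of-freedom count shows the system is exactly determined (SP fixes $\sigma_1,\sigma_\hid$, FL fixes $\sigma_L$, BC+LD fix the three learning rates), so the unique solution up to $\Theta$ is the \textbf{FSC} scaling, giving the ``iff''. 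For parts (i) and (ii), where exactly one of the three properties fails, Prop.~\ref{prop:backward-crit} no longer applies to the full conjunction and I would argue directly through \eqref{eq:feature-speed} (in its RMS form \eqref{eq:MLP-aligned-update}). For \textbf{NTK}, (LD) and (BC) force $\dot\Ll_{\leq L-1}=\Theta(1)$, and substituting the NTK output scale $\sigma_L=\Theta(1/\sqrt m)$ gives $\Vert\dot f_{L-1}\Vert_\RMS=\Theta\big(\sqrt L/(\sqrt m\,\Vert b_L\Vert_2)\big)=o(1)$ as $m\to\infty$, since this scale exceeds the FSC value, so (FL) fails. For \textbf{MF+$\mu$P}, (FL) holds by construction, which forces $\dot\Ll_{\leq L-1}=\Theta\big(\cos(\theta_{L-1})\,m\,\sigma_L\Vert b_L\Vert_2\big)=\Theta(L^{-1/2})$ once the mean-field scale $\sigma_L=\Theta(1/m)$ is inserted; since (BC) makes the output-layer contribution comparable, $-\dot\Ll=\Theta(L^{-1/2})=o(1)$ and (LD) fails.

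The main obstacle I anticipate is the careful bookkeeping for part (iii): one must verify that the number of independent constraints imposed by (SP), (FL), (LD) and (BC) exactly matches the six HP groups and that the resulting system (in the exponents of $m,L,\Vert x\Vert_2,\Vert b_L\Vert_2$) has the tabulated FSC values as its unique solution in both the (Dense) and (Sparse) regimes, including the boundary-layer corrections that make $\eta_1$ and $\eta_L$ differ from $\eta_\hid$. A secondary technical point is ensuring all $\Theta$ estimates hold with probability tending to one uniformly; this is inherited from Prop.~\ref{prop:scales-RELU-MLP} and Thm.~\ref{thm:BFA-spectrum} but must be tracked through the products defining the gradient norms.
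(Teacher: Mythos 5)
Your proposal is correct and follows essentially the same route as the paper's proof: verify (SP) and (BC) by direct computation from the forward/backward scales of Prop.~\ref{prop:scales-RELU-MLP}, then use the feature speed formula (via Prop.~\ref{prop:backward-crit} for the ``iff'' in (iii), and directly via Eq.~\eqref{eq:MLP-aligned-update} for the failure modes in (i)--(ii)) together with $\cos(\theta_{L-1})=\Theta(L^{-1/2})$. Your explicit ordering --- establishing (BC) unconditionally before invoking the (BC)-conditional BFA estimate, and checking (FL) for \textbf{MF+$\mu$P} through the formula rather than through Eq.~\eqref{eq:FL} (which presupposes (LD)) --- is a slightly more careful rendering of what the paper compresses into ``by direct computation.''
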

This theorem identifies the new HP scaling \textbf{FSC} for deep ReLU MLP where the scale of the output layer depends on the depth. We compare empirically the sensitivities (Eq.~\eqref{eq:MLP-aligned-update}) of the various scalings in Fig.~\ref{fig:sensitivities}, and the results are consistent with theory. Finally, let us mention that even though \textbf{FSC} fixes some degeneracies of deep MLPs, other problems arise when considering multiple inputs, such as degeneracy of the conjugate kernel and NTK~\citep{hayou2019impact}, which make ReLU MLPs a fundamentally flawed model at large depth. Hence, the analysis of large depth scalings for MLPs is mostly of theoretical interest.

\begin{table}
\caption{HP scalings for MLPs under the \emph{dense} setting (for the \emph{sparse} setting, replace $k$ and $d$ by $1$). For $L$ fixed, both \textbf{MF+$\mu$P} and \textbf{FSC} coincide with $\mu$P. Values in red are exact, the others are up to a multiplicative factor in $\Theta(1)$.
}\label{tab:scalings-MLP}
\vskip 0.15in
\centering
\renewcommand{\arraystretch}{1.5}
\begin{tabular}{c|c|c|c|c}
\hline
 & & Input  & Hidden & Output \\
\hline \hline
\multirow{2}{*}{\rotatebox{90}{FSC}} & init.~std. $\sigma_\ell$ & $\nicefrac{1}{\sqrt{d}}$ & $\sqrt{\nicefrac{{\color{red}2}}{m}}$ & $\nicefrac{\sqrt{k L}}{m}$\\
\cline{2-5}
& LR $\eta_\ell$ & $\nicefrac{m}{L^2d}$ & $\nicefrac{1}{L^2}$ & $\nicefrac{k}{Lm}$\\
\hline \hline
\multirow{2}{*}{\rotatebox{90}{MF+$\mu$P}} & init.~std. $\sigma_\ell$ & $\nicefrac{1}{\sqrt{d}}$ & $\sqrt{\nicefrac{{\color{red}2}}{m}}$ & $\nicefrac{\sqrt{k}}{m}$\\
\cline{2-5}
& LR $\eta_\ell$ & $\nicefrac{m}{L^{3/2}d}$ & $\nicefrac{1}{L^{3/2}}$ & $\nicefrac{k}{L^{3/2}m}$\\
\hline \hline
\multirow{2}{*}{\rotatebox{90}{NTK}} & init.~std. $\sigma_\ell$ & $\nicefrac{1}{\sqrt{d}}$ & $\sqrt{\nicefrac{{\color{red}2}}{m}}$ & $\nicefrac{1}{\sqrt{m}}$\\
\cline{2-5}
& LR $\eta_\ell$ & $\nicefrac{1}{Ld}$ & $\nicefrac{1}{Lm}$ & $\nicefrac{k}{Lm}$\\
\hline
\end{tabular}
\hspace{0.3cm}
\renewcommand{\arraystretch}{1.0}
\end{table}

\subsection{Characterizing HP scalings for ResNets}\label{subsec:scaled-MLP}
We now discuss HP scalings for single-input ResNets (Eq.~\eqref{eq:resnet}) with $\beta=O(1/\sqrt{L})$.
\begin{proposition}[ResNets scalings]\label{prop:resnet-scalings}
Take $\beta=O(1/\sqrt{L})$, consider the same $6$ degrees of freedom as in the previous section and assume that the conclusions of Prop.~\ref{prop:scales-linear-resnet} holds. Then properties (SP), (BC), (LD) and (FL) hold at initialization if and only if the scalings are those given in Table~\ref{tab:scalings-ResNets}.
\end{proposition}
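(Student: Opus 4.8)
The plan is to mirror the proof of Prop.~\ref{thm:scalings-MLP}, the two substantive changes being that the degenerate output angle $\cos(\theta_{L-1})=\Theta(L^{-\nicefrac12})$ of the ReLU MLP is replaced by the non-degenerate $\cos(\theta_{L-1})=\Theta(1)$ supplied by Prop.~\ref{prop:scales-linear-resnet} (under (BC)), and that the hidden-block gradients acquire an extra factor $\beta$ coming from the branch scaling in Eq.~\eqref{eq:resnet}. Throughout I would use, from Prop.~\ref{prop:scales-linear-resnet}, the forward scale $\Vert f_\ell\Vert_\RMS=\Theta(\Vert x\Vert_\RMS)$ and the backward scale $\Vert b_\ell\Vert_2=\Theta(\sqrt m\,\sigma_L\Vert b_L\Vert_2)$, equivalently $\Vert b_{L-1}\Vert_\RMS=\Theta(\sigma_L\Vert b_L\Vert_2)$; neither of these requires (BC), only the angle estimate does.

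First I would fix the two forward scales. Property (SP) requires $\Vert f_v\Vert_\RMS=\Theta(1)$ for $v\in[1\:L-1]$, and since the forward scale of Prop.~\ref{prop:scales-linear-resnet} is attained precisely at $\sigma_1=\Theta(1/\sqrt d)$ and $\sigma_{\hid}=\Theta(1/\sqrt m)$, these two choices are forced (up to $\Theta(1)$) and conversely suffice. This consumes the two scale HPs of the input and hidden blocks, leaving $\sigma_L$ to be pinned by feature learning. Next I would fix $\sigma_L$: invoking (BC) gives $\cos(\theta_{L-1})=\Theta(1)$ via Prop.~\ref{prop:scales-linear-resnet}, so the equivalence Prop.~\ref{prop:backward-crit} (for (FL), (LD), (BC)) forces $\Vert b_{L-1}\Vert_\RMS=\Theta\big(1/(m\cos(\theta_{L-1}))\big)=\Theta(1/m)$. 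Comparing with $\Vert b_{L-1}\Vert_\RMS=\Theta(\sigma_L\Vert b_L\Vert_2)$ yields $\sigma_L=\Theta\big(1/(m\Vert b_L\Vert_2)\big)$, i.e.\ $\sqrt k/m$ in the dense setting and $1/m$ in the sparse setting. Note that this is exactly the MF+$\mu$P output scale of the MLP table \emph{without} the extra $\sqrt L$, which is precisely the signature of the non-degenerate angle.

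Finally I would read off the learning rates from Eq.~\eqref{eq:BC}, $\eta_\ell=\Theta\big(1/(L\Vert\nabla_\ell\Ll\Vert_2^2)\big)$, for which I compute the three gradient magnitudes. The crucial bookkeeping point, and the main difference from the MLP, is that $W_\ell$ enters the hidden update scaled by $\beta$, so $\nabla_\ell\Ll=\beta\,b_\ell f_{\ell-1}^\top$ and hence $\Vert\nabla_{\hid}\Ll\Vert_2=\beta\Vert b_\ell\Vert_2\Vert f_{\ell-1}\Vert_2$, whereas the input and output layers are plain linear maps with $\Vert\nabla_1\Ll\Vert_2=\Vert b_1\Vert_2\Vert x\Vert_2$ and $\Vert\nabla_L\Ll\Vert_2=\Vert b_L\Vert_2\Vert f_{L-1}\Vert_2$. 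Substituting the forward and backward scales, $\sigma_L=\Theta(1/(m\Vert b_L\Vert_2))$ and $\beta=\Theta(1/\sqrt L)$ gives $\Vert\nabla_{\hid}\Ll\Vert_2=\Theta(\beta\Vert x\Vert_\RMS)$, $\Vert\nabla_L\Ll\Vert_2=\Theta(\sqrt m\,\Vert b_L\Vert_2\Vert x\Vert_\RMS)$ and $\Vert\nabla_1\Ll\Vert_2=\Theta(\sqrt{d/m}\,\Vert x\Vert_\RMS)$, and Eq.~\eqref{eq:BC} then reproduces the $\eta$-entries of Table~\ref{tab:scalings-ResNets}.

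Since each contribution is then $\eta_\ell\Vert\nabla_\ell\Ll\Vert_2^2=\Theta(1/L)$ by construction, summing the $L$ balanced terms gives $-\dot\Ll=\Theta(1)$, which is (LD) and closes the ``if'' direction; the ``only if'' direction is this same chain run backwards, each scaling being forced because (SP) pins the forward scales, Prop.~\ref{prop:backward-crit} is an equivalence, and the gradient magnitudes are determined. I expect the main obstacle to be purely organizational rather than technical: the one delicate point is the mild self-consistency of invoking $\cos(\theta_{L-1})=\Theta(1)$, which itself presupposes (BC). This is harmless since (BC) is one of the enforced properties, so one establishes (BC) first and then applies the angle estimate; the substantive novelty over the MLP argument is entirely carried by the $\beta$ factor in $\nabla_{\hid}\Ll$ and by the non-degenerate $\cos(\theta_{L-1})$.
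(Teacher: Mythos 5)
Your proposal is correct and follows essentially the same route as the paper's (much terser) proof: (SP) pins $\sigma_1$ and $\sigma_{\hid}$, Prop.~\ref{prop:backward-crit} together with the non-degenerate angle $\cos(\theta_{L-1})=\Theta(1)$ from Prop.~\ref{prop:scales-linear-resnet} forces $\Vert b_{L-1}\Vert_\RMS=\Theta(1/m)$ and hence $\sigma_L=\sqrt{k}/m$, and Eq.~\eqref{eq:BC} then determines the learning rates. Your explicit bookkeeping of the $\beta$ factor in $\nabla_{\hid}\Ll$ (yielding the $1/(\beta^2 L)$ hidden LR) and your remark on the (BC)-dependence of the angle estimate are details the paper leaves implicit, but the argument is the same.
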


\begin{table}
\caption{\textbf{FSC} scalings identified in Prop.~\ref{prop:resnet-scalings} for ResNets.  All HPs are specified up to a multiplicative factor in $\Theta(1)$. When $\beta=\Theta(L^{-\nicefrac12})$ and $k=d=\Theta(1)$, these scalings coincide with the so-called ``depth $\mu$P'' introduced in~\citep{bordelon2023depthwise} and also studied in~\cite{yang2023feature}.}\label{tab:scalings-ResNets}
\vskip 0.15in
\centering
\renewcommand{\arraystretch}{1.5}
\begin{tabular}{c|c|c|c}
\hline
  & Input  & Hidden & Output \\
\hline \hline
 init.~std. $\sigma_\ell$ & $\nicefrac{1}{\sqrt{d}}$ & $\nicefrac{1}{\sqrt{m}}$ & $\nicefrac{\sqrt{k}}{m}$\\
 LR $\eta_\ell$ & $\nicefrac{m}{Ld}$ & $\nicefrac{1}{\beta^2 L}$ & $\nicefrac{k}{Lm}$\\
\hline
\end{tabular}
\renewcommand{\arraystretch}{1.0}
\end{table}

\begin{figure}
\centering
\subfigure[ReLU MLP ($\beta=1$)]{
\includegraphics[width=0.4\linewidth]{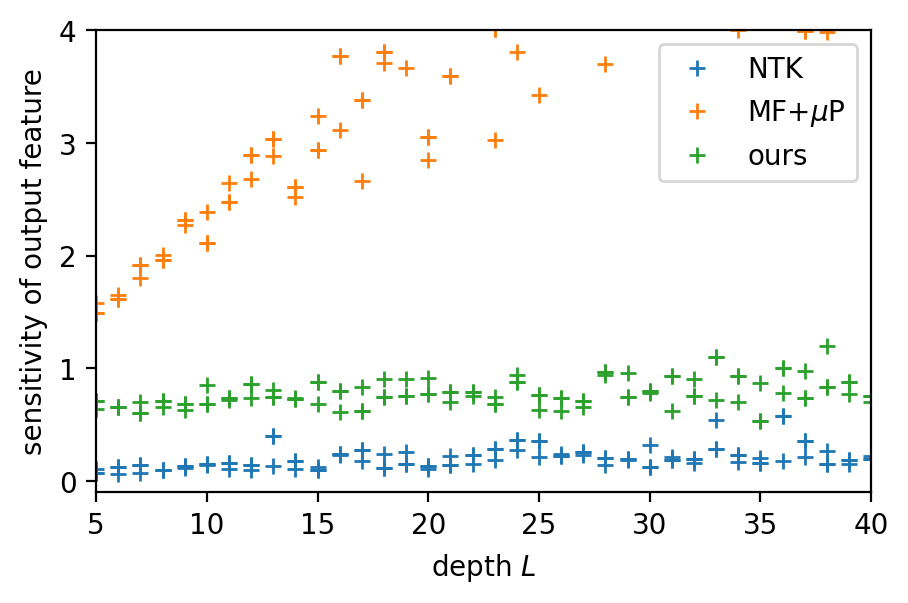}
}
\hfill
\subfigure[ReLU ResNet]{
\includegraphics[width=0.4\linewidth]{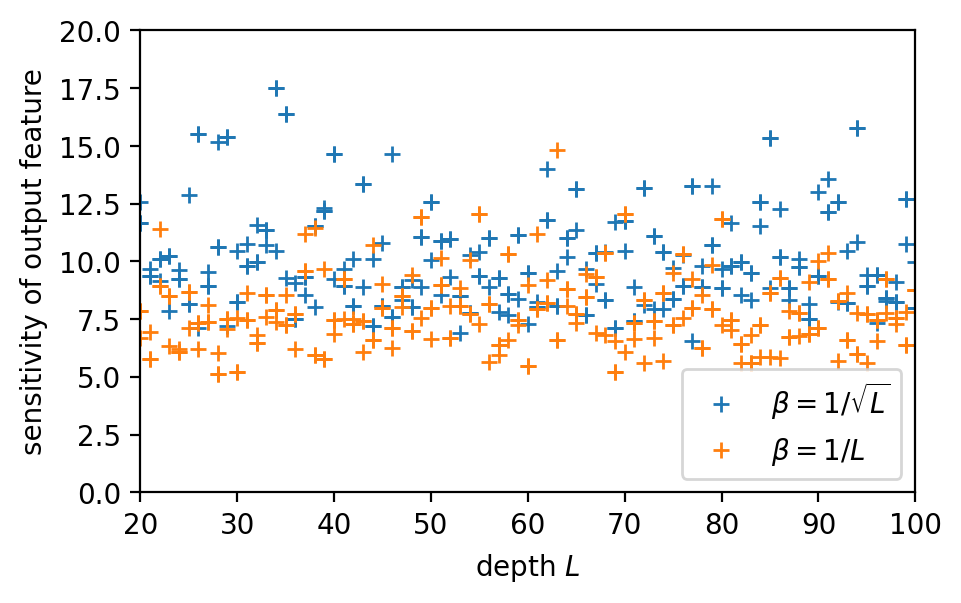}
}
\caption{Sensitivities $S_{L-1}$ (see Eq.~\eqref{eq:MLP-aligned-update}) of the last layer of activations ($g_{L-1}$ in the MLP and $f_{L-1}$ in the ResNet) computed via the formula $\Vert \delta f_{L-1}\Vert_\RMS/\vert \delta \Ll\vert$ where $\delta$ denotes the change after one GD step (master learning rate $\delta t=0.01$, $d=10$, $n=1$ input sample on the unit sphere and $k=1$). (a) ReLU MLP of width $m=400$. From our theory we have for \textbf{NTK}, $S=\Theta(1/\sqrt{m})$ (close to $0$ and constant with depth); for \textbf{MF+$\mu$P} $S=\Theta(\sqrt{L})$ and for the \textbf{FSC} $S=\Theta(1)$ (b) ReLU ResNet of width $m=400$: for both choices of branch scale, the sensitivities is stable around a nonzero value.}\label{fig:sensitivities}
\end{figure}

\section{Minimal desiderata and stability under homogeneity}\label{sec:gradient_stability}
It is not clear a priori why the property (BC) studied in Section~\ref{sec:criteria}should be enforced. In this section, we consider homogeneous architectures, such as ReLU MLPs, and show that a slightly more general version of (BC) is a related to a notion of \emph{gradient stability}.

\subsection{Stability and backward speed formula}
For a general architecture of the form Eq.~\eqref{eq:NN-general}, consider the following stability property (S), which is necessary if one wants to have comparable behavior between the first GD step and the next. It is related to the usual notion of smoothness in optimization:
\begin{enumerate}
\item [(S)] \textbf{Stability}. It holds $\Vert \frac{\d}{\d t} \nabla_\ell \Ll\Vert_2/\Vert \nabla_\ell \Ll\Vert_2 = O(1)$ for $\ell \in [1\: L]$.
\end{enumerate}

We will study this property in a ReLU MLP  with a single input as in Eq.~\eqref{eq:MLP} (the extension to linear ResNets is simple as only the BFAs change).
 In this case we have $\nabla_\ell \Ll = b_\ell g_{\ell-1}^\top$ and thus $\Vert \nabla_\ell \Ll \Vert_F=\Vert b_\ell\Vert_2\Vert g_{\ell-1}\Vert_2$. It follows
$$
\frac{\Vert  \frac{\d}{\d t}\nabla_\ell \Ll\Vert_F}{\Vert \nabla_\ell \Ll\Vert_F } \leq \frac{\Vert \dot b_\ell\Vert_2\Vert g_{\ell-1}\Vert_2+\Vert b_\ell\Vert_2\Vert \dot g_{\ell-1}\Vert_2}{\Vert  b_\ell\Vert_2\Vert g_{\ell-1}\Vert_2} =\frac{\Vert \dot b_\ell\Vert_2}{\Vert b_\ell\Vert_2}+\frac{\Vert \dot g_{\ell-1}\Vert_2}{\Vert g_{\ell-1}\Vert_2}.
$$
We can thus ensure the gradient stability by ensuring, for all $\ell\in [1\: L]$,
\begin{itemize}
\item[(FS)] \textbf{Forward stability}. It holds $\frac{\Vert \dot g_{\ell-1}\Vert_2}{\Vert g_{\ell-1}\Vert_2}=O(1)$ for $\ell \in [1\: L]$, and 
\item[(BS)] \textbf{Backward stability}. It holds $\frac{\Vert \dot b_\ell\Vert_2}{\Vert b_\ell\Vert_2}=O(1)$ for $\ell \in [1\: L]$.
\end{itemize}
We focus on these simpler desiderata (FS) and (BS) instead of (S) for the rest of the discussion. To estimate $\dot b_{v}$, we rely on a ``backward'' version of the feature speed formula that holds in $1$-homogeneous NNs.

\begin{proposition}[Backward speed formula]\label{prop:backward-speed}
Consider a general architecture of the form~\eqref{eq:NN-general}, take $v\in [1\: L]$ and assume that the map $f_v\to f_L$ is positively $1$-homogeneous\footnote{For Euler's identity to hold, we also assume that its selection~\citep{bolte2020mathematical} is $0$-homogeneous.}. If $- f_L^\top \nabla^2 \mathrm{loss}[f_L] \dot f_L+ \sum_{\ell>v} \eta_\ell \Vert \nabla_\ell \Ll\Vert_2^2=0$ then $\dot b_v=0$. Otherwise, the (non-oriented) angle $\tilde \theta_v$ between $f_v$ and $\dot b_v$ is well defined in ${[0,\nicefrac{\pi}{2}[}$ and it holds
$$
\Vert \dot b_v\Vert_2 = \frac{- f_L^\top \nabla^2 \mathrm{loss}[f_L] \dot f_L+ \sum_{\ell >v} \eta_\ell \Vert \nabla_\ell \Ll\Vert_2^2}{\Vert f_v\Vert_2 \cos(\tilde \theta_v)}.
$$
\end{proposition}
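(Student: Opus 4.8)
The plan is to mirror the proof of the forward feature speed formula (Thm.~\ref{thm:main}), but to apply the chain rule to $b_v$ rather than $f_v$ and to use Euler's identity to exploit the assumed positive $1$-homogeneity of the map $f_v\mapsto f_L$. First I would compute an inner product analogous to Eq.~\eqref{eq:proof-main}: instead of pairing $\dot f_v$ against $-b_v$, I would pair $\dot b_v$ against $f_v$, i.e.~evaluate $f_v^\top \dot b_v$. The quantity $\Vert f_v\Vert_2\Vert \dot b_v\Vert_2\cos(\tilde\theta_v)=f_v^\top\dot b_v$ (up to sign) is exactly what the claimed formula rearranges, so the whole content is to show $f_v^\top \dot b_v$ equals the stated numerator.

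The key computational step is to differentiate $b_v=(\partial \Ll/\partial f_v)^\top$ in time and use the structure of backpropagation. Since $\Ll=\mathrm{loss}(f_L)$ and $f_L$ is a function of $f_v$ and of the parameters $w_\ell$ for $\ell>v$, I expect $\dot b_v$ to split into two kinds of contributions: one coming from the change of $f_L$ induced by the downstream parameter updates $\dot w_\ell$ for $\ell>v$ (which, as in Thm.~\ref{thm:main}, should reassemble into $\sum_{\ell>v}\eta_\ell\Vert\nabla_\ell\Ll\Vert_2^2$), and one coming from the curvature of the loss, which is where the Hessian term $-f_L^\top\nabla^2\mathrm{loss}[f_L]\dot f_L$ enters. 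Concretely, writing $b_v=J_{v\to L}^\top\,(\nabla\mathrm{loss}[f_L])$ where $J_{v\to L}=\partial f_L/\partial f_v$, Euler's identity for the $1$-homogeneous map gives $J_{v\to L}\,f_v=f_L$; transposing and pairing against $f_v$ should let me rewrite $f_v^\top\dot b_v$ using $J_{v\to L}f_v=f_L$, turning the Jacobian-dependent piece into an expression evaluated at $f_L$ itself. The $0$-homogeneity of the selection derivative (the footnote assumption) is what guarantees that the time-derivative of $J_{v\to L}$ does not contribute an extra term when contracted with $f_v$, since $\dot J_{v\to L}\,f_v=0$ by differentiating Euler's identity.

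The remaining piece is to identify the two resulting terms. Differentiating $b_v=J_{v\to L}^\top\nabla\mathrm{loss}[f_L]$ and contracting with $f_v$, the loss-curvature contribution becomes $f_L^\top\nabla^2\mathrm{loss}[f_L]\dot f_L$ (using $J_{v\to L}f_v=f_L$), carrying the minus sign into the numerator; the parameter-update contribution, by the same chain-rule bookkeeping as in Eq.~\eqref{eq:proof-main} but now restricted to $\ell>v$, collapses to $\sum_{\ell>v}\eta_\ell\Vert\nabla_\ell\Ll\Vert_2^2$. Once $f_v^\top\dot b_v$ is shown to equal this numerator, the degenerate case (numerator $=0$) forces $\dot b_v=0$, and otherwise $\tilde\theta_v\in[0,\nicefrac{\pi}{2}[$ is well defined and the formula follows by dividing by $\Vert f_v\Vert_2\cos(\tilde\theta_v)$, exactly as in Thm.~\ref{thm:main}.

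The main obstacle I anticipate is the careful handling of the homogeneity assumption: I must differentiate Euler's identity $J_{v\to L}f_v=f_L$ in time and verify that the term $\dot J_{v\to L}\,f_v$ vanishes (or more precisely that $f_v^\top\dot J_{v\to L}^\top\nabla\mathrm{loss}[f_L]$ vanishes), which is where the $0$-homogeneity of the selection derivative is essential. Getting the bookkeeping of which terms are ``upstream'' versus ``downstream'' of $f_v$ exactly right — so that only $\ell>v$ appears in the parameter sum while the curvature term absorbs the dependence of $f_L$ on $f_v$ through the downstream dynamics — is the delicate part; everything else is a direct transcription of the argument in Thm.~\ref{thm:main}.
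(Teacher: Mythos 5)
Your overall plan is workable, but the pivotal step as you state it is wrong, and the error sits exactly where the term $\sum_{\ell>v}\eta_\ell\Vert\nabla_\ell\Ll\Vert_2^2$ has to come from. Writing $J=\partial f_L/\partial f_v$, differentiating Euler's identity $Jf_v=f_L$ in time gives $\dot J f_v + J\dot f_v = \dot f_L = J\dot f_v + \sum_{\ell>v}\frac{\partial f_L}{\partial w_\ell}\dot w_\ell$, hence $\dot J f_v = \sum_{\ell>v}\frac{\partial f_L}{\partial w_\ell}\dot w_\ell$, which is \emph{not} zero. What the $0$-homogeneity of $J$ in $f_v$ gives you is only that the part of $\dot J$ attributable to the motion of $f_v$ annihilates $f_v$ (differentiate $Jf_v=f_L$ with respect to $f_v$ to get $\big(\frac{\partial J}{\partial f_v}[\delta]\big)f_v=0$ for any direction $\delta$); the part attributable to the downstream updates $\dot w_\ell$, $\ell>v$, survives. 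And it must survive: contracting $\dot b_v = \dot J^\top\nabla\mathrm{loss}[f_L] + J^\top\nabla^2\mathrm{loss}[f_L]\dot f_L$ with $f_v$ gives $f_v^\top\dot b_v = (\dot Jf_v)^\top\nabla\mathrm{loss}[f_L] + f_L^\top\nabla^2\mathrm{loss}[f_L]\dot f_L$, and it is precisely $(\dot Jf_v)^\top\nabla\mathrm{loss}[f_L] = \sum_{\ell>v}\dot w_\ell^\top\nabla_\ell\Ll = -\sum_{\ell>v}\eta_\ell\Vert\nabla_\ell\Ll\Vert_2^2$ that produces the parameter sum. As written, your argument either loses this sum (if $\dot Jf_v=0$ is taken literally) or double-counts it (since you also invoke a separate ``chain-rule bookkeeping'' contribution for $\ell>v$). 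With the correction above your route does close, but you must state which piece of $\dot J$ vanishes against $f_v$ and which piece becomes the parameter sum.

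For comparison, the paper avoids differentiating the Jacobian altogether: it first collapses everything to the scalar identity $b_v^\top f_v = \frac{\partial \Ll}{\partial f_L} f_L$ (Euler), differentiates that scalar in time to get $\dot b_v^\top f_v + b_v^\top \dot f_v = f_L^\top\nabla^2\mathrm{loss}[f_L]\dot f_L + \dot\Ll$, and then substitutes $-b_v^\top\dot f_v=\sum_{\ell\leq v}\eta_\ell\Vert\nabla_\ell\Ll\Vert_2^2$ from the proof of Thm.~\ref{thm:main} together with $\dot\Ll=-\sum_{\ell=1}^L\eta_\ell\Vert\nabla_\ell\Ll\Vert_2^2$; the restriction to $\ell>v$ then appears as the difference of the two sums. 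That version is shorter and immune to the $\dot J$ bookkeeping that trips you up.
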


In the context of ReLU MLPs with a linear loss, we have by differentiating the back-propagation recursion and noticing that all terms involving $\phi''$ are zero almost surely\footnote{A downside of this computation is that, because of its local nature, it ignores the contributions of the Jacobian's discontinuities to $\dot b_L$, while they do have a ``macroscopic'' effect with a non-vanishing step-size. For instance, taking first the infinite width and then the small step-size limit, would give a different expression.
} that:
\begin{equation}\label{eq:backward-kernel}
\dot b_v =\sum_{\ell>v} \eta_\ell \Big(\frac{\partial g_{\ell-1}}{\partial f_v}\Big)^\top g_{\ell-1} b_\ell^\top b_\ell = \sum_{\ell>v} \eta_\ell \Vert b_\ell\Vert_2^2 \Big(\frac{\partial g_{\ell-1}}{\partial f_v}\big)^\top \Big(\frac{\partial g_{\ell-1}}{\partial f_v}\Big) f_v = \tilde K_v f_v
\end{equation}
where the last expression defines $\tilde K_v$. Reasoning as in Thm.~\eqref{thm:BFA-spectrum}, since $f_v$ is Gaussian at initialization and noticing that $\tilde K_v$ has a structure similar to that of $K_{L-v}$, we have that $\cos (\tilde \theta_v)=\Theta(\sqrt{L-v})$, see the details in Lem.~\ref{lem:spectra-MLP}. We can thus estimate $\dot b_v$ just as well as $\dot f_v$.

\subsection{Scale invariance for homogeneous architectures}

Homogeneous architectures such as ReLU MLP satisfy scale invariance properties that are important to take into account in our discussion. The following result presents a general invariance under blockwise rescaling, provided one uses scale-invariant LRs.  This is related to known invariance results under global rescaling for scale invariant losses~\citep{van2017l2, li2022robust,wan2020spherical}.

\begin{proposition}[Invariance under block-wise rescaling]\label{prop:scale-invariance}
Consider a function $f_L(w_1,\dots,w_L)$ (the NN, in our context) which is separately positively $1$-homogeneous in each of its blocks of parameters $w_\ell\in \RR^{p_\ell}$. Let $\theta_0= (w_1(0),\dots,w_L(0))$ and let $\tilde \theta_0=\sigma \odot \theta_0 \coloneqq (\sigma_1w_1(0),\dots,\sigma_L w_L(0))$ for some scale factors $\sigma\in \RR^L_{+}$. Let $\theta(t)$ and $\tilde \theta(t)$ be the iterates of GD on $\Ll:\theta\mapsto \mathrm{loss}(f_L(\theta))$ with LR satisfying $\eta_\ell(t) \Vert \nabla_\ell \Ll(\theta(t))\Vert_2^2= \tilde \eta_\ell(t) \Vert \nabla_\ell \Ll(\tilde \theta(t))\Vert_2^2$ and starting from $\theta_0$ and $\tilde \theta_0$ respectively. If $\prod_{\ell=1}^L \sigma_\ell = 1$ then $\tilde \theta(t)=\sigma \odot \theta(t)$ for all $t\geq 1$.
\end{proposition}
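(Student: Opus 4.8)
The plan is to proceed by induction on $t$, the crux being to understand how the rescaling $\theta\mapsto\sigma\odot\theta$ acts on the block gradients $\nabla_\ell\Ll$. First I would record the consequence of the product constraint: since $f_L$ is separately positively $1$-homogeneous in each block and $\prod_{\ell}\sigma_\ell=1$, scaling every block simultaneously gives $f_L(\sigma\odot\theta)=\big(\prod_\ell\sigma_\ell\big)f_L(\theta)=f_L(\theta)$, so the network output --- and hence $\mathrm{loss}(f_L)$ together with its differential with respect to $f_L$ --- is left unchanged by the rescaling.

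The key step is the gradient transformation rule $\nabla_\ell\Ll(\sigma\odot\theta)=\sigma_\ell^{-1}\nabla_\ell\Ll(\theta)$. To obtain it I would exploit two homogeneity facts about the Jacobian $\partial f_L/\partial w_\ell$: differentiating the relation $f_L(\dots,cw_\ell,\dots)=c\,f_L(\dots,w_\ell,\dots)$ in $w_\ell$ shows that $\partial f_L/\partial w_\ell$ is $0$-homogeneous in its own block, while differentiating the analogous relation for a block $w_j$ with $j\neq\ell$ shows it is $1$-homogeneous in each \emph{other} block. Applying both at $\sigma\odot\theta$ yields $\frac{\partial f_L}{\partial w_\ell}(\sigma\odot\theta)=\big(\prod_{j\neq\ell}\sigma_j\big)\frac{\partial f_L}{\partial w_\ell}(\theta)=\sigma_\ell^{-1}\frac{\partial f_L}{\partial w_\ell}(\theta)$, again using $\prod_\ell\sigma_\ell=1$. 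Composing with the (unchanged) loss differential via the chain rule gives the claimed $\sigma_\ell^{-1}$ scaling of $\nabla_\ell\Ll$.

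Given this, the LR-matching condition $\eta_\ell(t)\Vert\nabla_\ell\Ll(\theta(t))\Vert_2^2=\tilde\eta_\ell(t)\Vert\nabla_\ell\Ll(\tilde\theta(t))\Vert_2^2$ forces $\tilde\eta_\ell(t)=\sigma_\ell^2\,\eta_\ell(t)$ whenever $\tilde\theta(t)=\sigma\odot\theta(t)$. The induction then closes immediately: assuming $\tilde w_\ell(t)=\sigma_\ell w_\ell(t)$ for every $\ell$, the update reads $\tilde w_\ell(t+1)=\sigma_\ell w_\ell(t)-\sigma_\ell^2\eta_\ell(t)\cdot\sigma_\ell^{-1}\nabla_\ell\Ll(\theta(t))=\sigma_\ell\big(w_\ell(t)-\eta_\ell(t)\nabla_\ell\Ll(\theta(t))\big)=\sigma_\ell w_\ell(t+1)$, so the relation $\tilde\theta=\sigma\odot\theta$ is preserved. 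The base case $t=0$ holds by the definition of $\tilde\theta_0$.

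I expect the only real obstacle to be the bookkeeping of the two distinct homogeneity degrees in the gradient rule (the second step): one must keep the $0$-homogeneity in block $\ell$ separate from the $1$-homogeneity in the remaining blocks, and it is precisely the constraint $\prod_\ell\sigma_\ell=1$ that collapses the product $\prod_{j\neq\ell}\sigma_j$ into the clean factor $\sigma_\ell^{-1}$. For the ReLU instantiation one should also note that a positive rescaling $\sigma_\ell>0$ leaves every activation pattern intact, so the selection derivatives transform exactly as the formal derivatives above and the argument applies verbatim at differentiability points.
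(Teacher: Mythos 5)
Your proposal is correct and follows essentially the same route as the paper's proof: induction on $t$, the gradient transformation rule $\nabla_\ell\Ll(\sigma\odot\theta)=\sigma_\ell^{-1}\nabla_\ell\Ll(\theta)$ obtained from the $0$-homogeneity of $\partial f_L/\partial w_\ell$ in its own block and $1$-homogeneity in the others, the resulting LR ratio $\tilde\eta_\ell=\sigma_\ell^2\eta_\ell$, and the one-line update computation. The closing remark on selection derivatives under positive rescaling is a sensible extra precaution but does not change the argument.
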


\subsection{Characterization of admissible scalings for ReLU MLPs}
In view of Prop.~\ref{prop:scale-invariance}, for homogeneous architectures, one can ignore (SP) since any GD dynamics is equivalent to a dynamics where (SP) holds at initialization. However, if the scale of the forward pass is not normalized, (FL) needs now to be adapted to a scale-free version, as follows:
\begin{enumerate}
\item[(RFL)] \textbf{Relative feature learning}. It holds $\Vert \dot f_{L-1}\Vert_2/\Vert f_{L-1}\Vert_2 = \Theta(1)$.
\end{enumerate}

We are finally in position to gather all these insights and characterize all \emph{admissible} scalings for ReLU MLPs, i.e.~scalings that satisfy the minimal desiderata (RFL), (LD), (FS) and (BS) at initialization. 
\begin{theorem}[Minimal desiderata for MLPs]\label{thm:minimal-desiderata}
Consider a ReLU MLP with $6$ degrees of freedom: three initialization scales $\sigma_{1},\sigma_{\hid}, \sigma_L$ and three LRs $\eta_1,\eta_\hid, \eta_L$. Assume $\Vert b_L\Vert_{2}=\Vert g_0\Vert_\RMS=1$ and a linear loss for simplicity. Then the minimal desiderata (RFL), (LD), (FS) and (BS) hold at initialization in the limit $m\to \infty$ then $L\to \infty$ if and only if
\begin{align*}
(\sqrt{d}\sigma_1)\cdot (\sqrt{m/2}\sigma_{\hid})^{L-2}\cdot \sigma_L = \Theta(\sqrt{L}/m), && 
C_1+C_\hid=\Theta(1)&&\text{and}&& C_L=O(1),
\end{align*}
where $C_1 = \eta_1\Vert \nabla_1\Ll\Vert_2^2$, $C_\hid = \sum_{\ell=2}^{L-1} \eta_\hid \Vert \nabla_\ell \Ll\Vert_2^2$ and $C_L=\eta_L\Vert \nabla_L \Ll\Vert_2^2$. In particular, the scaling \textbf{FSC} (Table~\ref{tab:scalings-MLP}) satisfies these desiderata.
\end{theorem}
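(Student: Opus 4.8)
The plan is to reduce to the signal-propagating regime, rewrite each of the four desiderata as a scaling relation between the contributions $C_1,C_\hid,C_L$ and the output scale, and then solve the resulting system by elimination. First I would invoke the block-wise scale invariance of Prop.~\ref{prop:scale-invariance}: all of (RFL), (LD), (FS), (BS) are expressed through scale-invariant quantities (the angles $\theta_v,\tilde\theta_v$, the ratios $\Vert\dot f_v\Vert_2/\Vert f_v\Vert_2$ and $\Vert\dot b_v\Vert_2/\Vert b_v\Vert_2$, the loss decay, and the products $C_\ell=\eta_\ell\Vert\nabla_\ell\Ll\Vert_2^2$), and the invariant $\prod_\ell\sigma_\ell$ equals, up to the fixed factor $\sqrt d\,(\sqrt{m/2})^{L-2}$, exactly the product $P$ appearing on the left of the claimed identity. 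Hence I may rescale the blocks so that $\sqrt d\,\sigma_1=\Theta(1)$ and $\sqrt{m/2}\,\sigma_\hid=1$, i.e.~so that (SP) holds, after which $\sigma_L=\Theta(P)$. In this representative, Prop.~\ref{prop:scales-RELU-MLP}, its backward analogue, and Thm.~\ref{thm:BFA-spectrum} supply the initialization estimates I will use: $\Vert f_v\Vert_\RMS=\Theta(1)$ and $\Vert b_v\Vert_\RMS=\Theta(\sigma_L)$ uniformly in $v$, together with $\cos(\theta_v)=\Theta(v^{-1/2})$ and $\cos(\tilde\theta_v)=\Theta((L-v)^{-1/2})$. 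I set $u\coloneqq m\sigma_L/\sqrt L$, so that the product condition reads $u=\Theta(1)$.

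Next I translate the desiderata using the two speed formulas. By the RMS form~\eqref{eq:MLP-aligned-update} of Thm.~\ref{thm:main}, the relative feature speed at layer $v$ is $\Vert\dot f_v\Vert_\RMS/\Vert f_v\Vert_\RMS=\Theta\big(C_{\le v}\sqrt v/(m\sigma_L)\big)$ with $C_{\le v}=\sum_{\ell\le v}C_\ell$; since $C_{\le v}$ and $\sqrt v$ both increase in $v$, this is maximized at $v=L-1$, so (FS) reduces to its $v=L-1$ instance, which is precisely (RFL): $(C_1+C_\hid)\sqrt L/(m\sigma_L)=\Theta(1)$, i.e.~$C_1+C_\hid=\Theta(u)$. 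For the backward pass I use Prop.~\ref{prop:backward-speed}: the linear loss annihilates the Hessian term, so $\Vert\dot b_v\Vert_2/\Vert b_v\Vert_2=\Theta\big(C_{>v}\sqrt{L-v}/(m\sigma_L)\big)$ with $C_{>v}=\sum_{\ell>v}C_\ell$; here both factors decrease in $v$, so (BS) is governed by $v=1$ and amounts to $C_\hid+C_L=O(u)$. Finally (LD) is simply $C_1+C_\hid+C_L=\Theta(1)$.

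The equivalence is then a short elimination. Assuming the desiderata: $C_1+C_\hid=\Theta(u)$ by (RFL), and $C_L\le C_\hid+C_L=O(u)$ by (BS), so $C_1+C_\hid+C_L=\Theta(u)$; comparing with (LD) forces $u=\Theta(1)$, i.e.~$P=\Theta(\sqrt L/m)$, whence $C_1+C_\hid=\Theta(u)=\Theta(1)$ and $C_L=O(u)=O(1)$. Conversely, given the three conditions: $C_1+C_\hid=\Theta(1)=\Theta(u)$ gives (RFL), hence (FS) by the monotonicity above; $C_1+C_\hid$ bounded below together with $C_L=O(1)$ gives (LD); and $C_\hid+C_L\le C_1+C_\hid+C_L=O(1)=O(u)$ gives (BS). The ``in particular'' claim follows by substituting the FSC values of Table~\ref{tab:scalings-MLP} (with $k,d=\Theta(1)$): one checks $P=\Theta(\sqrt L/m)$ from $\sigma_L=\sqrt{kL}/m$, and that the listed learning rates yield $C_1+C_\hid=\Theta(1)$ and $C_L=O(1)$, using the balanced hidden gradients $\Vert\nabla_\ell\Ll\Vert_2^2=\Theta(m^2\sigma_L^2)$.

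The main obstacle is not this algebra but the inputs it rests on, namely the uniform two-sided angle estimates $\cos(\theta_v)=\Theta(v^{-1/2})$ and $\cos(\tilde\theta_v)=\Theta((L-v)^{-1/2})$ at initialization. These require the spectral-moment machinery of Thm.~\ref{thm:BFA-spectrum} (Gaussianity and asymptotic independence of $b_v$, respectively $f_v$, from the kernel in the width limit) and the delicate estimation of the BFK and backward-kernel spectra via Lem.~\ref{lem:spectra-MLP}; one must also verify that the ``generalized (BC)'' balance legitimizing these estimates holds, which here is automatic since $\Vert b_v\Vert_2$ and $\Vert g_{v-1}\Vert_2$ are constant across layers at initialization. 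A secondary point deserving care is the order of limits ($m\to\infty$ then $L\to\infty$), and the observation that it is (BS) at the \emph{early} layers, not (LD) alone, that pins down the output scale.
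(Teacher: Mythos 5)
Your argument is correct and follows essentially the same route as the paper's proof: reduce to the (SP) representative via the block-wise rescaling invariance of Prop.~\ref{prop:scale-invariance}, translate (RFL), (LD) and (BS) into the relations $C_1+C_\hid=\Theta(m\sigma_L/\sqrt{L})$, $C_1+C_\hid+C_L=\Theta(1)$ and $C_\hid+C_L=O(m\sigma_L/\sqrt{L})$ using the two speed formulas with $\cos(\theta_{L-1})=\Theta(L^{-\nicefrac12})$ and $\cos(\tilde\theta_1)=\Theta(L^{-\nicefrac12})$, and eliminate. Your explicit monotonicity-in-$v$ argument showing that (FS) and (BS) are governed by the extreme layers is a welcome detail the paper leaves implicit in its converse direction.
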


\section{Conclusion}
Starting from the feature speed formula, our approach allows to conveniently recover and characterize in an elementary fashion certain properties of existing HP scalings and to discover new ones, with essentially all the technical difficulty contained in the estimation of the BFA. The limitations of our approach are related to the blind spots of Thm.~\eqref{thm:main}: it can only quantify feature speed for (S)GD (and does not apply to variants in its current form) and at ``cut nodes'' in the NN architecture, where all the signal goes through (in particular, it does not apply inside the blocks of a ResNet).

In future works, besides removing these limitations, it would be interesting to have a better understanding of the BFA, both from a quantitative and a qualitative viewpoint.

\newpage
{
\section*{Acknowledgments}

This work was supported in part by the International Centre for Theoretical Sciences (ICTS) for participating in the meeting -   Data Science: Probabilistic and Optimization Methods  (code:ICTS/dspom2023/7)

\small
\bibliography{LC}
}


\appendix

\section{Proofs omitted from the main text}

\begin{proof}[Proof of Lem.~\ref{lem:isotropic-spectral}]
Writing $K=VDV^\top$ with $D=\text{diag}(\lambda_1,\dots,\lambda_m)$ and $V\in \RR^{m\times m}$ orthonormal, we have $\Vert Ka\Vert_2^2 = a^\top V D^2 V^\top a$. Conditioned on $K$, the vector $u=V^\top a$ is isotropic Gaussian so $\E u_i^2=\frac{1}{m}$ for $i\in [1:m]$. Hence, on the one hand
\begin{equation*}
\E [\Vert Ka \Vert_2^2\mid K] = \E \Big[\sum_{i=1}^m \lambda_i^2 u_i^2\mid  (\lambda_i)_{i=1}^m \Big] = \sum_{i=1}^m \lambda_i^2 \E[u_i^2] = \frac{1}{m} \sum_{i=1}^m \lambda_i^2.
\end{equation*}
On the other hand, using the fact that the variance of a chi-square random variable is $2$,
\begin{align*}
\Var [\Vert Ka \Vert_2^2\mid K] &=\E \Big[\Big( \sum_{i=1}^m \lambda_i^2 (u_i^2-\nicefrac{1}{m})\Big)^2 \mid  (\lambda_i)_{i=1}^m \Big] \\
&= \sum_{i=1}^m \lambda_i^4 \E[(u_i^2-\nicefrac{1}{m})^2] + \sum_{i\neq j}^m \lambda_i^2\lambda_j^2 \E[(u_i^2-\nicefrac{1}{m})(u_j^2-\nicefrac{1}{m})] =\frac{2}{m^2}\sum_{i=1}^m \lambda_i^4. \qedhere
\end{align*}
\end{proof}

\begin{proof}[Proof of Prop.\ref{prop:scales-RELU-MLP}]
When $\beta=1$, Eq.~\eqref{eq:scale-backward} is classical from the signal propagation literature~\citep{poole2016exponential, hanin2018start, hanin2018neural} (the fluctuations around the limit have also been studied in~\cite{hanin2020products}). Note that these results are proved with $k=\Theta(1)$, but~\cite{hanin2018neural} allows to conclude as well when $k$ diverges at least if the initial gradient $b_L = \big(\frac{\partial \Ll}{\partial f_L})^\top \in \RR^k$ is independent of the randomness of the weights, which is what (H2) guarantees. We note that analogous results have been derived for a variety of activation functions, and we focus on ReLU only for conciseness.

Let us now discuss the BFAs, assuming for simplicity that $\eta_1=0$ as the contribution of $w_1$ to the BFK is asymptotically negligible assuming (BC). We consider the BFA at $g_v$ and denote $z_v\coloneqq (\partial \Ll/\partial g_v)^\top$. The main result of~\citep{jelassi2023depth} can be  restated as follows: with $k=d=\Theta(1)$, the choice $\sigma_L=\frac{1}{m}$ and learning-rates $\eta_\ell = \Theta(L^{-\nicefrac{3}{2}})$, it holds
$\Vert \dot g_{L-1}\Vert_\RMS = \Theta(1)$. In view of~\eqref{eq:scale-backward}, it holds in their setting for $\ell\in [2\: L-1]$
\[
\Vert \nabla_\ell\Ll\Vert^2_2 = \Vert b_\ell g_{\ell-1}^\top \Vert_2^2=\Vert g_{\ell-1}\Vert^2_2\Vert b_\ell\Vert_2^2 = \Theta(m_{\ell-1} \cdot m_{L-1}\cdot \sigma_L^2) = \Theta(m_{\ell-1}/m_{L-1}).
\]
Using $\eta_\ell = \Theta(L^{-\nicefrac{3}{2}})$, it follows 
$
\sum_{\ell=2}^{L-1} \eta_\ell \Vert \nabla_\ell \Ll\Vert_2^2 = \Theta(L^{-1/2}).
$
By Thm.~\ref{thm:main} and using $m_{L-1}\Vert z_{L-1}\Vert_\RMS=\Theta(1)$, we get
\begin{align*}
\Vert \dot{g}_{L-1}\Vert_\RMS = \frac{\sum_{\ell\leq L} \eta_\ell \Vert \nabla_\ell \Ll\Vert_2^2 }{\cos(\theta_{L-1})\cdot m_{L-1}\cdot \Vert z_{L-1}\Vert_\RMS} =\Theta(1) &&\Rightarrow& & \cos(\theta_{L-1})=\Theta(L^{-1/2}).
\end{align*}
This shows the result for the BFA at $g_{L-1}$ and the result holds as well for the BFA at $f_{L-1}$ up to hidden constants.
\end{proof}

Interestingly, in view of Thm.~\eqref{thm:BFA-spectrum}, we can interpret the result of~\citep{jelassi2023depth} as a computation on the spectral moments of a certain random matrix, as stated in the following lemma.
\begin{lemma}[Spectrum of BFK and FBK in ReLU MLPs]\label{lem:spectra-MLP}
For a ReLU MLP at random initialization satisfying (SP) and (BC), consider the BFK (at $g_v$ instead of $f_v$): 
$$
K_v= \sum_{\ell\leq v} \eta_\ell \Big(\frac{\partial g_v}{\partial w_\ell}\Big)\Big(\frac{\partial g_v}{\partial w_\ell}\Big)^{\top}
$$
and $\theta_v$ the (non-oriented) angle between $\dot g_v$ and $z_\ell\coloneqq (\partial \Ll/\partial g_v)^\top$.
Then, in the notations of Thm.~\eqref{thm:BFA-spectrum}, it holds as hidden width diverges $\cos(\theta_v) =\big( M_1(K_v)/\sqrt{M_2(K_v)}\big)=\Theta(v^{-1/2})$. 

Consider also, for a linear loss, the kernel $\tilde K_v$ such that $\dot b_v=\tilde K_vf _v$ (see Eq.~\eqref{eq:backward-kernel}) and $\tilde \theta_v$ the (non-oriented) angle between $f_v$ and $-\dot b_v$. Then it holds, as hidden width diverges, $\cos(\tilde \theta_v) =\Theta \big(M_1(\tilde K_v)/\sqrt{M_2(\tilde K_v)} \big)=\Theta((L-v)^{-1/2})$.
\end{lemma}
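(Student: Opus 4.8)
The plan is to reduce the final statement (Lem.~\ref{lem:spectra-MLP}) to two applications of the spectral characterization of the BFA already established in Thm.~\ref{thm:BFA-spectrum}, combined with the spectral-moment computation implicit in Prop.~\ref{prop:scales-RELU-MLP}. First I would verify the hypotheses of Thm.~\ref{thm:BFA-spectrum} for the kernel $K_v$ at $g_v$: since $g_v=\phi(f_v)$ is followed in the forward pass by a weight multiplication $f_{v+1}=W_{v+1}g_v$, the backward vector $z_v=W_{v+1}^\top(\dots)$ is (asymptotically) a Gaussian vector independent of $K_v$ as width diverges, by the tensor-program asymptotic-independence results cited after Thm.~\ref{thm:BFA-spectrum}. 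Hence the theorem gives $\cos(\theta_v)\overset{pr.}{\to} M_1(K_v)/\sqrt{M_2(K_v)}$, provided the moment-ratio boundedness assumptions hold. Then the identity $\cos(\theta_v)=\Theta(v^{-1/2})$ is exactly the content already proved in Prop.~\ref{prop:scales-RELU-MLP} (restated there for $g_{L-1}$, but the argument is layer-agnostic: replace $L-1$ by $v$ throughout, using $\sum_{\ell\le v}\eta_\ell\Vert\nabla_\ell\Ll\Vert_2^2=\Theta(v/L)\cdot(\text{const})$ under (BC), and $m_v\Vert z_v\Vert_\RMS=\Theta(\sqrt{v/L})$ via the backward-scale estimate). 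So the first claim is essentially a repackaging: it identifies the geometric quantity $v^{-1/2}$ as the spectral ratio $M_1/\sqrt{M_2}$ of $K_v$.

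For the second claim the structural observation is that $\tilde K_v$, defined via $\dot b_v=\tilde K_v f_v$ in Eq.~\eqref{eq:backward-kernel}, is built from the Jacobians $\partial g_{\ell-1}/\partial f_v$ for $\ell>v$, weighted by $\eta_\ell\Vert b_\ell\Vert_2^2$. This is the mirror image of $K_v$: whereas $K_v$ aggregates forward-Jacobian outer products over the $v$ layers \emph{below} $v$, the kernel $\tilde K_v$ aggregates Jacobian outer products over the $L-v$ layers \emph{above} $v$. The plan is to make this duality precise by checking that the random-matrix ensemble governing $\tilde K_v$ has the same distributional structure as that governing $K_{L-v}$ (a product of $L-v$ independent ReLU-Gaussian layer Jacobians, with matching learning-rate and norm weights under (BC)). Since $f_v$ is Gaussian at initialization (signal propagation, (H1)) and, being the pre-activation at layer $v$, is independent of the weights $W_{\ell}$ for $\ell>v$ that appear in $\tilde K_v$, the independence hypothesis of Thm.~\ref{thm:BFA-spectrum} again applies. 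Invoking the theorem a second time yields $\cos(\tilde\theta_v)\overset{pr.}{\to} M_1(\tilde K_v)/\sqrt{M_2(\tilde K_v)}$, and the spectral-moment computation transported from $K_{L-v}$ gives $\Theta((L-v)^{-1/2})$.

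The main obstacle is making the ``$\tilde K_v$ looks like $K_{L-v}$'' claim rigorous rather than heuristic. The forward kernel $K_{L-v}$ and the backward kernel $\tilde K_v$ are not literally equal in distribution: the Jacobian $\partial g_{\ell-1}/\partial f_v$ is a product of ReLU diagonal masks and weight matrices read in a specific direction, and the weighting factors ($\eta_\ell$ versus $\eta_\ell\Vert b_\ell\Vert_2^2$) differ. What one really needs is only that the first two spectral moments $M_1,M_2$ scale the same way, i.e.~that $M_1(\tilde K_v)/\sqrt{M_2(\tilde K_v)}=\Theta((L-v)^{-1/2})$, which is a weaker statement than full distributional identity. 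I would therefore carry out the moment computation directly: estimate $M_1(\tilde K_v)=\frac1{m}\tr\tilde K_v$ and $M_2(\tilde K_v)=\frac1{m}\tr\tilde K_v^2$ using the normalization $\tr\tilde K_v=\Vert\dot b_v\Vert$-type identities and the fact that, under (BC), the $L-v$ contributing layers each contribute comparably, so the moments accumulate like a sum of $L-v$ roughly-independent terms — giving the trace a mean of order one but the second moment an order-$(L-v)$ enhancement from the alignment of the dominant eigendirection, exactly as in the delicate count of~\citep{jelassi2023depth} underlying Prop.~\ref{prop:scales-RELU-MLP}. The remaining technical care is verifying the uniform boundedness of $\sqrt{M_2}/M_1$ and $\sqrt{M_4}/M_2$ required by Thm.~\ref{thm:BFA-spectrum}; these follow from the same concentration estimates for products of ReLU-Gaussian layers that justify the signal-propagation scalings, which I would cite rather than reprove.
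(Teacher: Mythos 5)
Your handling of the first claim is essentially the paper's: verify the hypotheses of Thm.~\ref{thm:BFA-spectrum} (Gaussianity of $z_v=W_{v+1}^\top b_{v+1}$ is immediate; the independence of the ReLU masks is what \cite[Prop.~2]{hanin2020products} supplies) and import $\cos(\theta_v)=\Theta(v^{-1/2})$ from Prop.~\ref{prop:scales-RELU-MLP}. One caveat: your intermediate bookkeeping ($m_v\Vert z_v\Vert_\RMS=\Theta(\sqrt{v/L})$) is off, since under (SP) the backward norms $\Vert b_\ell\Vert_2$ do not depend on $\ell$; the clean way to pass from $v=L-1$ to general $v$ is spectral rather than via the feature speed formula, because $\cos(\theta_v)$ depends only on the law of $K_v$ and of the independent Gaussian $z_v$, and $K_v$ is distributed, up to a constant factor, as the top-layer BFK of a truncated network of depth $v+1$.

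For the second claim you correctly identify the mirror-image structure relating $\tilde K_v$ to $K_{L-v}$, but you then declare the matching of their spectra to be ``the main obstacle'' and propose to bypass it with a direct computation of $M_1(\tilde K_v)$ and $M_2(\tilde K_v)$ that you do not carry out and that would amount to redoing the delicate analysis of \cite{jelassi2023depth} for a new ensemble. This is a genuine gap, and the paper closes it in two short steps that your proposal misses. First, the weight mismatch you worry about disappears: factoring $\big(\frac{\partial g_v}{\partial w_\ell}\big)\big(\frac{\partial g_v}{\partial w_\ell}\big)^\top=\Vert g_{\ell-1}\Vert_2^2\big(\frac{\partial g_v}{\partial f_\ell}\big)\big(\frac{\partial g_v}{\partial f_\ell}\big)^\top$ shows that $K_v$ carries weights $\eta_\ell\Vert g_{\ell-1}\Vert_2^2$ while $\tilde K_v$ carries weights $\eta_\ell\Vert b_\ell\Vert_2^2$, and under (SP) and (BC) \emph{both} are constant in $\ell$, so both kernels are proportional to unweighted sums of Jacobian Grammians. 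Second, with $\frac{\partial g_v}{\partial f_\ell}=D_vW_v\cdots W_{\ell+1}D_\ell$ and $\frac{\partial g_{\ell-1}}{\partial f_v}=D_{\ell-1}W_{\ell-1}\cdots W_{v+1}D_v$, the summands of $\tilde K_v$ have the form $A^\top A$ where those of $K_{L-v}$ have the form $BB^\top$, for products $A,B$ built from the same numbers of iid square Gaussian matrices and independent Bernoulli masks; since $W$ and $W^\top$ are equal in law and $A^\top A$, $AA^\top$ share nonzero eigenvalues, $\tilde K_v$ has the same distribution of nonzero eigenvalues as $K_{L-v}$ up to a global factor, and the ratio $M_1/\sqrt{M_2}$ is invariant under such rescaling. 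No new moment computation is needed. A secondary inaccuracy: you assert that $f_v$ is exactly independent of the ingredients of $\tilde K_v$, but $\tilde K_v$ contains the mask $D_v=\diag(\phi'(f_v))$, so here too the independence is only the asymptotic one justified by \cite[Prop.~2]{hanin2020products}.
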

\begin{proof}
We have already seen in the proof of Prop.\ref{prop:scales-RELU-MLP} that $\cos(\theta_v)=\Theta(v^{-1/2})$. It thus remains to see that the assumptions of Thm.~\eqref{thm:BFA-spectrum} are satisfied: the independence of $z_\ell$ follows from~\cite[Prop.~2]{hanin2020products} and the Gaussianity of $z_\ell$ is direct since $z_\ell = W_{\ell+1}^\top b_{\ell+1}$ where $W_{\ell+1}$ is Gaussian and independent from $b_{\ell+1}$. Also, we have the more explicit expression
$$
K_v = \sum_{\ell=1}^v \eta_\ell \Vert g_{\ell-1}\Vert_2^2   \Big(\frac{\partial g_v}{\partial f_\ell}\Big)\Big(\frac{\partial g_v}{\partial f_\ell}\Big)^{\top}
$$
where $\frac{\partial g_v}{\partial f_\ell}=D_{v} W_{v} \dots D_{\ell+1}W_{\ell+1}D_\ell$ and $D_i = \mathrm{diag}(\phi(f_i))$ (by ~\cite[Prop.~2]{hanin2020products}, these matrices can be taken as matrices with Bernoulli random variables on the diagonals, independent from everything else). Since under (SP) and (BC) we have that $\eta_\ell \Vert g_{\ell-1}\Vert_2^2$ is constant for $\ell\in [1\: L-1]$, it follows
$$
K_v \propto \sum_{\ell=1}^v (D_{v} W_{v} \dots D_{\ell+1}W_{\ell+1}D_\ell)(D_{v} W_{v} \dots D_{\ell+1}W_{\ell+1}D_\ell)^\top.
$$
For the second claim, we have (see Section~\ref{sec:gradient_stability})
$
\tilde K_v = \sum_{\ell=v+1}^L \eta_\ell \Vert b_\ell\Vert_2^2  \Big(\frac{\partial g_{\ell-1}}{\partial f_v}\big)^\top \Big(\frac{\partial g_{\ell-1}}{\partial f_v}\Big).
$
Under (SP) and (BC), we have $\eta_\ell \Vert b_{\ell}\Vert_2^2$ is constant for $\ell\in [2\:L]$, hence it follows
$$
\tilde K_v \propto \sum_{\ell=v+1}^L ( D_{\ell-1}W_{\ell-1} \dots W_{v+1}D_v)^\top ( D_{\ell-1}W_{\ell-1} \dots W_{v+1}D_v).
$$
By comparing the expressions for $K_v$ and $\tilde K_v$, we see that $\tilde K_v$ has the same distribution of nonzero eigenvalues as $K_{L-v}$ (potentially up to a global multiplicative factor) and the conclusion follows.
\end{proof}

\begin{proof}[Proof of Prop.~\ref{prop:scales-linear-resnet}]
The estimate for $\Vert f_\ell\Vert_\RMS$ is classical, see e.g.~\cite{li2021future}. For the backward pass estimate,  we rely on~\cite[Lem.~3]{marion2024deep} (see also~\cite{zhang2022stabilize} for related results with the ReLU activation function), which implies that 
$\sigma_{\min}(\ell\to v) = \Theta(1)$ and $\sigma_{\max}(\ell\to v)=\Theta(1)$, where $\sigma_{\min}(\ell\to v)$ and $\sigma_{\max}(\ell\to v)$ are the smallest, respectively largest singular value of $\frac{\partial f_v}{\partial f_\ell}$. The estimate on $b_\ell = \big(\frac{\partial f_L}{\partial f_v}\big)^\top b_L $ directly follows.

For the BFA, we will apply the first bound of Thm.~\ref{thm:BFA-spectrum}, namely $\cos(\theta_v)\geq \lambda_{\min}(K_v)/\lambda_{\max}(K_v)$ where $\lambda_{\min}(K_v)$ and $\lambda_{\max}(K_v)$ are the smallest, respectively largest, eigenvalues of $K_v$. In the forward pass~\eqref{eq:resnet}, let us write $g_{\ell}=\phi(f_{\ell})$ and $h_{\ell}=W_\ell g_{\ell-1}$.  By direct computations, it holds (here $w_\ell$ is the vectorization of $W_\ell$):
\begin{align*}\label{eq:computation-BFK-jacobian}
K_v = \sum_{\ell=1}^v \eta_\ell \Big(\frac{\partial f_v}{\partial w_\ell}\Big) \Big(\frac{\partial f_v}{\partial w_\ell}\Big)^\top
&= \sum_{\ell=2}^v \eta_\ell \Vert g_{\ell-1}\Vert_2^2 \Big(\frac{\partial f_v}{\partial h_\ell}\Big) \Big(\frac{\partial f_v}{\partial h_\ell}\Big)^\top
&= \sum_{\ell=2}^v \eta_\ell \beta^2 \Vert g_{\ell-1}\Vert_2^2 \Big(\frac{\partial f_v}{\partial f_\ell}\Big) \Big(\frac{\partial f_v}{\partial f_\ell}\Big)^\top.
\end{align*}
Using the inequalities
\begin{align*}
\lambda_{\min}(K) \geq \beta^2 \sum_{\ell=2}^v \eta_\ell  \Vert g_{\ell-1}\Vert_2^2 \sigma_{\min}\Big(\frac{\partial f_v}{\partial f_\ell}\Big)^2, && \lambda_{\max}(K) \leq \beta^2 \sum_{\ell=2}^v \eta_\ell  \Vert g_{\ell-1}\Vert_2^2 \sigma_{\max}\Big(\frac{\partial f_v}{\partial f_\ell}\Big)^2
\end{align*}
we deduce $\cos(\theta_v)\geq \lambda_{\min}(K_v)/\lambda_{\max}(K_v)=\Theta(1)$.
\end{proof}

\begin{proof}[Proof of Prop.~\ref{thm:scalings-MLP}]
In this proof, we say that a claim is found ``by direct computation'' when it can be directly deduced from the conclusion of Prop.~\ref{prop:scales-RELU-MLP}. In particular, for the computation of scale invariant LRs, we use the fact that $\Vert \nabla_\ell \Ll\Vert_2 = \Vert b_\ell g_{\ell-1}^\top\Vert_F = \Vert b_\ell\Vert_2\cdot \Vert g_{\ell-1}\Vert_2$. Also, by Prop.~\ref{prop:scales-RELU-MLP}, under (SP) and (BC) it holds $\cos(\theta_{L-1})=\Theta(L^{-\nicefrac12})$.

(i) One has that \textbf{MF+$\mu$P} satisfies (SP), (BC) by direct computation, and (FL) by Prop.~\ref{prop:backward-crit}. We have seen in the proof of Prop.~\ref{prop:scales-RELU-MLP} that $-\dot \Ll = \Theta(L^{-1/2})$, so (LD) does not hold.

(ii) For \textbf{NTK}, (SP), (LD) and (BC) can be checked by direct computation. For (FL), we have $\Vert b_{L-1}\Vert_\RMS = \Theta(1/\sqrt{m})$ so :
$$
\Vert \dot f_{L-1}\Vert_\RMS = \Theta\Big(\frac{1}{\cos(\theta_v)\cdot m\cdot \Vert b_{L-1}\Vert_\RMS}\Big) = \Theta\Big(\frac{1}{\cos(\theta_{L-1})\cdot \sqrt{m} }\Big).
$$
But in the considered asymptotics $\sqrt{m}\cdot \cos(\theta_{L-1}) = \Theta(\sqrt{m/L})\to \infty$ so $\Vert \dot f_{L-1}\Vert_\RMS =o(1)$.

(iii) Properties (SP) specifies $\sigma_1$ and $\sigma_2=\cdot=\sigma_{L-1}$, and Prop.~\ref{prop:backward-crit} gives, with (FL), $\Vert b_{L-1}\Vert_\RMS = \Theta(\frac{1}{\cos(\theta_{L-1})m})= \Theta(\sqrt{L}/m)$ which imposes $\sigma_L= \sqrt{kL}/m$. Then the LR are given by~\eqref{eq:BC}.
\end{proof}

\begin{proof}[Proof of Prop.~\ref{prop:resnet-scalings}]
Properties (SP) specifies $\sigma_1$ and $\sigma_\hid=\sigma_2=\cdot=\sigma_{L-1}$, and Prop.~\ref{prop:backward-crit} gives, with (FL), $\Vert b_{L-1}\Vert_\RMS = \Theta(\frac{1}{\cos(\theta_{L-1})m})= \Theta(1/m)$ which requires $\sigma_L= \sqrt{k}/{m}$. Then the LR are characterized by~\eqref{eq:BC}.
\end{proof}

\begin{proof}[Proof of Prop.~\ref{prop:backward-speed}]
By the chain rule and Euler's identity for positively $1$-homogeneous functions, it holds 
$$
b_v^\top f_v = \frac{\partial \Ll}{\partial f_v} f_v =  \frac{\partial \Ll}{\partial f_L}\frac{\partial f_L}{\partial f_v} f_v  = \frac{\partial \Ll}{\partial f_L} f_L.
$$
Now, by differentiating in time both sides we get
$$
\dot b_v^\top f_v + b_v^\top \dot{f}_v = f_L^\top \nabla^2 \mathrm{loss}[f_L] \dot f_L + \frac{\partial \Ll}{\partial f_L} \dot f_L =f_L^\top \nabla^2 \mathrm{loss}[f_L] \dot f_L +\dot \Ll.
$$
We have $\dot \Ll = -\sum_{\ell=1}^L \eta_\ell \Vert \nabla_\ell \Ll\Vert_2^2$ and moreover, from the proof of Thm.~\ref{thm:main}, $- b_v^\top \dot{f}_v= \sum_{\ell\leq v} \eta_\ell \Vert \nabla_\ell \Ll\Vert_2^2$. So we deduce
$$
-\dot b_v^\top f_v = - f_L^\top \nabla^2 \mathrm{loss}[f_L] \dot f_L + \sum_{\ell>v} \eta_\ell \Vert \nabla_\ell \Ll\Vert_2^2.
$$
We conclude by writing $-\dot b_v^\top f_v =\Vert \dot b_v\Vert_2 \Vert f_v\Vert_2 \cos(\tilde \theta_v)$ and rearranging.
\end{proof}

\begin{proof}[Proof of Prop.~\ref{prop:scale-invariance}]
By assumption at time $t=0$, it holds $\tilde \theta(0)=\sigma \odot \theta(0)$ so let us prove the result by recursion. Assume that the claim is true at iteration $t$. Since $\prod \sigma_\ell=1$, it holds $f_L(\theta(t))=f_L(\tilde \theta(t))$. Moreover, since $\frac{\partial f_L}{\partial w_\ell}$ is $0$-homogeneous in $w_\ell$ and separately $1$-homogeneous in $(w_i)_{i\neq \ell}$. It follows
\begin{align*}
\nabla_\ell \Ll(\tilde \theta(t)) &= \Big(\frac{\partial f_L}{\partial w_\ell}[\tilde \theta(t)]\Big)^\top \nabla \mathrm{loss}(f_L(\tilde \theta(t)))\\
&= (\prod_{i\neq \ell} \sigma_i) \Big(\frac{\partial f_L}{\partial w_\ell}[\theta(t)]\Big)^\top \nabla \mathrm{loss}(f_L(\theta(t))) = \frac{1}{\sigma_\ell} \nabla_\ell \Ll(\theta(t)).
\end{align*}
In particular, we deduce that the LRs are related by 
$
\frac{\tilde \eta(t)}{\eta(t)} = \frac{\Vert \nabla_\ell \Ll(\theta(t))\Vert_2^2}{\Vert \nabla_\ell \Ll(\tilde \theta(t))\Vert_2^2} = \sigma_\ell^2.
$
It follows, for any $\ell\in [1\: L]$,
\begin{align*}
\tilde w_\ell(t+1) &= \tilde w_\ell(t) - \tilde \eta(t)  \nabla_\ell \Ll(\tilde \theta(t)) =\sigma_\ell w_\ell(t) - \sigma^2_\ell \eta(t) \frac{1}{\sigma_\ell} \nabla_\ell \Ll(\theta(t))  =\sigma_\ell w_\ell(t+1).
\end{align*}
This proves $\tilde \theta(t+1)=\sigma \odot \theta(t+1)$ and the claim follows by recursion.
\end{proof}

\section{Initializing with zero output weights} Let us mention an interesting degree of freedom for FSC in Table~\ref{tab:scalings-MLP}: up to adjusting the initial LR, it is possible to initialize the output layer with $0$ while still satisfying FSC at the next step. If one initializes the output layer $W_L$ with $0$ then all gradients are $0$ at time $0$ except that for $W_L$ which leads to the update (non-infinitesimal in this paragraph): 
$$
\delta W_L(0)=-  \eta_L(0) \cdot b_L(0)  g_{L-1}^\top(0).
$$
The second forward pass is the same as the first one, with the only difference that 
$$
 f_L(1) = - \eta_L(0) \Vert  g_{L-1}(0)\Vert_2^2 b_L(0) .
$$
Assuming $b_L(0)=b_L(1)$ (linear loss) for simplicity, this leads to a second backward pass:  
$$
z_{L-1}(1) \coloneqq \Big(\frac{\partial \Ll}{\partial g_{L-1}}(1)\Big)^\top = (-   \eta_L(0) b_L(0) g_{L-1}(0)^\top)^\top b_L(0) = - \eta_L(0)\cdot \Vert b_L(0)\Vert^2_2  g_{L-1}(0).
$$
For the second GD step to satisfy (FL), we just need to ensure
\begin{align*}
m \Vert z_{L-1}(1)\Vert_\RMS = \Theta(\sqrt{L}) &&\Leftrightarrow && \eta_L(0) =\Theta\Big(\frac{\sqrt{L}}{m\Vert b_L(0)\Vert_2^2}\Big).
\end{align*}
This is the LR to be used at time $0$, for the second step to satisfy (SP), (FL), (LD) and (BC).

\begin{proof}[Proof of Thm.~\ref{thm:minimal-desiderata}]
Prop.~\ref{prop:scale-invariance} shows that in fact only the product $\sigma_{1}\cdot \sigma_{h}^{L-2}\cdot \sigma_L$ is a relevant degree of freedom of scale. We can thus fix (arbitrarily) $\sigma_1=1/\sqrt{d}$ and $\sigma_\hid=2/\sqrt{m}$  so that (SP) is satisfied; we then have $\Vert b_v\Vert_2\Vert f_v\Vert_2 = \Theta(m\sigma_L)$ for $v\in [1\: L-1]$. Desideratum (RFL) requires
\begin{align*}
\frac{\Vert \dot f_{L-1}\Vert_2}{\Vert f_{L-1}\Vert_2}  = \frac{C_1+C_h}{\cos(\theta_{L-1})\Vert b_{L-1}\Vert_2 \Vert f_{L-1}\Vert_2}=\Theta(1) && \Leftrightarrow && \sigma_L \asymp  (C_1+C_h)\frac{\sqrt{L}}{m}.
\end{align*}
using that $\cos(\theta_{L-1})=\Theta(1/\sqrt{L})$. Moreover, (LD) requires $C_1+C_h+C_L=\Theta(1)$. At this stage, the output scale $\sigma_L$ is not yet entirely determined since $C_1+C_h=o(1)$ is not excluded. This is where (BS) comes into play. It requires in particular
\begin{align*}
\frac{\Vert \dot b_{1}\Vert_2}{\Vert b_{1}\Vert_2} = \frac{C_\hid+C_L}{\cos(\tilde \theta_{1})\Vert b_{1}\Vert_2\Vert f_{1}\Vert_2}=O(1) && \Leftrightarrow &&  (C_\hid+C_L)\frac{\sqrt{L}}{m}= O(\sigma_L)
\end{align*}
using that $\cos(\tilde \theta_{1})=\Theta(1/\sqrt{L})$ by Lem.~\ref{lem:spectra-MLP}.
Combining both conditions for $\sigma_L$ imply, on the one hand, that $C_h+C_L = O(C_1+C_\hid)$ hence $C_1+C_\hid=\Theta(1)$ and $\sigma_L = \Theta(\frac{\sqrt{L}}{m})$, which are equivalent to the constraints written in the theorem. Conversely, it is not difficult to see that these constraints lead to satisfying (RFL), (LD), (FS) and (BS).
\end{proof}

\section{Characterization of reparameterization invariant LR}\label{app:reparam-LR}
Consider a function $f:\prod_{\ell=1}^L \RR^{p_\ell}\to \RR$ admitting a (selection) derivative and, for a fixed scale vector $\alpha\in (\RR^*_+)^L$ consider the function $g(y)=f(\alpha \cdot y)$ where $\alpha \cdot x$ denotes $(\alpha_1 x_1,\dots,\alpha_L x_L)$. Consider one step of GD on the two functions, given for $\ell\in [1:L]$, by
\begin{align*}
x'_\ell = x_\ell -\eta_\ell \nabla_\ell f(x), &&
y'_\ell = y_\ell -\eta_\ell \nabla_\ell g(y)
\end{align*}
with identical starting points, that is $x_\ell = \alpha_\ell\cdot y_\ell$ for $\ell\in [1\: L]$.
\begin{proposition}
Consider adaptive learning rates, which are of the form $\eta_\ell =\eta_\ell(\nabla f(x))$. Then $x'=\alpha \cdot y'$ for all $\alpha \in (\RR^*_+)^L$ if and only if $\eta_\ell$ is $(-2)$-homogeneous in $\nabla_\ell f(x)$ and $0$-homogeneous in $\nabla_{\ell'} f(x)$ for $\ell\neq \ell'$. 
\end{proposition}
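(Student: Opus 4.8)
The plan is to characterize the adaptive learning rate $\eta_\ell$ by translating the invariance requirement $x' = \alpha \cdot y'$ into a functional equation and then solving it componentwise. First I would compute the gradient of $g$ in terms of the gradient of $f$. By the chain rule, since $g(y) = f(\alpha \cdot y)$, we have $\nabla_\ell g(y) = \alpha_\ell \nabla_\ell f(\alpha \cdot y) = \alpha_\ell \nabla_\ell f(x)$, using the fact that the starting points are related by $x_\ell = \alpha_\ell y_\ell$. This is the key computational identity, and it is straightforward.

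Next I would write out both update rules coordinate-wise and impose the invariance. The condition $x'_\ell = \alpha_\ell y'_\ell$ becomes
\begin{equation*}
x_\ell - \eta_\ell(\nabla f(x)) \nabla_\ell f(x) = \alpha_\ell\big(y_\ell - \eta_\ell(\nabla g(y)) \nabla_\ell g(y)\big).
\end{equation*}
Since $x_\ell = \alpha_\ell y_\ell$, the non-gradient terms cancel, leaving the requirement
\begin{equation*}
\eta_\ell(\nabla f(x)) \nabla_\ell f(x) = \alpha_\ell \cdot \eta_\ell(\nabla g(y)) \cdot \alpha_\ell \nabla_\ell f(x),
\end{equation*}
where I substituted $\nabla_\ell g(y) = \alpha_\ell \nabla_\ell f(x)$. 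Cancelling the common factor $\nabla_\ell f(x)$ (on the nonzero coordinates) reduces everything to the scalar functional identity $\eta_\ell(\nabla f(x)) = \alpha_\ell^2\, \eta_\ell(\alpha \cdot \nabla f(x))$, where $\alpha \cdot \nabla f(x)$ denotes the vector obtained by scaling the $\ell'$-th block of $\nabla f(x)$ by $\alpha_{\ell'}$ for each $\ell'$, since $\nabla g(y) = \alpha \cdot \nabla f(x)$ blockwise.

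Finally I would interpret this functional equation as the defining property of the claimed homogeneity. Writing $G = \nabla f(x)$ and letting the scaling $\alpha$ be arbitrary in $(\RR^*_+)^L$, the identity $\eta_\ell(\alpha \cdot G) = \alpha_\ell^{-2} \eta_\ell(G)$ says precisely that $\eta_\ell$ is positively $(-2)$-homogeneous in its own block $G_\ell$ and $0$-homogeneous in each other block $G_{\ell'}$, $\ell' \neq \ell$. For the forward direction I would let $\alpha$ range over all positive scalings; specializing $\alpha_{\ell'} = 1$ for $\ell' \neq \ell$ isolates the $(-2)$-homogeneity in block $\ell$, and specializing $\alpha_\ell = 1$ while varying a single other block isolates the $0$-homogeneity in that block. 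The converse is immediate: substituting the homogeneity relations back reproduces the identity and hence $x' = \alpha \cdot y'$. The main subtlety, rather than obstacle, is handling coordinates where $\nabla_\ell f(x) = 0$, on which the invariance condition is automatically satisfied and imposes no constraint, so the characterization should be read as holding on the set where the relevant gradient blocks are nonzero; this is the only point requiring a word of care.
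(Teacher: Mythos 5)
Your proposal is correct and follows essentially the same route as the paper: compute $\nabla_\ell g(y) = \alpha_\ell \nabla_\ell f(x)$ via the chain rule, substitute into the GD updates, cancel the base points, and reduce the invariance to the functional equation $\alpha_\ell^2\,\eta_\ell(\alpha\cdot\nabla f(x)) = \eta_\ell(\nabla f(x))$, which is read off as the claimed block-wise homogeneity. Your additional remarks on specializing $\alpha$ one block at a time and on the degenerate case $\nabla_\ell f(x)=0$ are sound refinements that the paper leaves implicit.
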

One such LR is precisely that suggested by Prop.~\ref{prop:backward-crit}: $\eta_\ell \propto \Vert \nabla_\ell f(x)\Vert_2^{-2}$.
\begin{proof}
For $\ell\in [1:L]$, it holds 
\begin{align*}
\alpha_\ell y'_\ell = \alpha_\ell y_\ell - \alpha_\ell \eta_\ell(\nabla g(y)) \nabla_\ell g(y) 
= x_\ell -\alpha_\ell^2 \eta_\ell(\alpha \cdot \nabla f(x)) \nabla_\ell f(x).
\end{align*}
Then $\alpha \cdot y'=x'$ for all $\alpha \in (\RR^*_+)^L$ is equivalent to
$$
\alpha_\ell^2 \eta_\ell(\alpha \cdot \nabla f(x)) =\eta_\ell(\nabla f(x)), \quad \forall \alpha \in (\RR^*_+)^L
$$
which is the claimed homogeneity property.
\end{proof}

\end{document}